\newtheorem{theorem}{Theorem}
\newtheorem{lemma}{Lemma}
\title{Mixture-of-Experts Meets In-Context \\ Reinforcement Learning}
\author{%
Wenhao Wu$^1$~~Fuhong Liu$^1$~~Haoru Li$^1$~~Zican Hu$^1$\vspace{0.25em} \\
\textbf{Daoyi Dong$^2$~~Chunlin Chen$^1$~~Zhi Wang$^1$\thanks{Correspondence to Zhi Wang <zhiwang@nju.edu.cn>.}}\vspace{0.5em} \\
$^1$Nanjing University \quad $^2$University of Technology Sydney \vspace{0.25em} \\
\texttt{\{wenhaowu, chika, haoruli, zicanhu\}@smail.nju.edu.cn} \vspace{0.25em} \\
\texttt{daoyi.dong@uts.edu.au} \vspace{0.25em} \quad 
\texttt{\{clchen, zhiwang\}@nju.edu.cn}
}
\begin{document}

\maketitle

\begin{abstract}
In-context reinforcement learning (ICRL) has emerged as a promising paradigm for adapting RL agents to downstream tasks through prompt conditioning. However, two notable challenges remain in fully harnessing in-context learning within RL domains: the intrinsic multi-modality of the state-action-reward data and the diverse, heterogeneous nature of decision tasks. To tackle these challenges, we propose \textbf{T2MIR} (\textbf{T}oken- and \textbf{T}ask-wise \textbf{M}oE for \textbf{I}n-context \textbf{R}L), an innovative framework that introduces architectural advances of mixture-of-experts (MoE) into transformer-based decision models. T2MIR substitutes the feedforward layer with two parallel layers: a token-wise MoE that captures distinct semantics of input tokens across multiple modalities, and a task-wise MoE that routes diverse tasks to specialized experts for managing a broad task distribution with alleviated gradient conflicts. To enhance task-wise routing, we introduce a contrastive learning method that maximizes the mutual information between the task and its router representation, enabling more precise capture of task-relevant information. The outputs of two MoE components are concatenated and fed into the next layer. Comprehensive experiments show that T2MIR significantly facilitates in-context learning capacity and outperforms various types of baselines. We bring the potential and promise of MoE to ICRL, offering a simple and scalable architectural enhancement to advance ICRL one step closer toward achievements in language and vision communities. Our code is available at \textcolor{magenta}{\href{https://github.com/NJU-RL/T2MIR}{https://github.com/NJU-RL/T2MIR}}.
\end{abstract}

% %%%%%%%%%%%%%%%%%%%%%%%%%%%%%%%%%%%%%%%%%%%%%%%%%%%%%%%%
% \begin{figure}[ht]
% \begin{center}
% \includegraphics[width=0.98\textwidth]{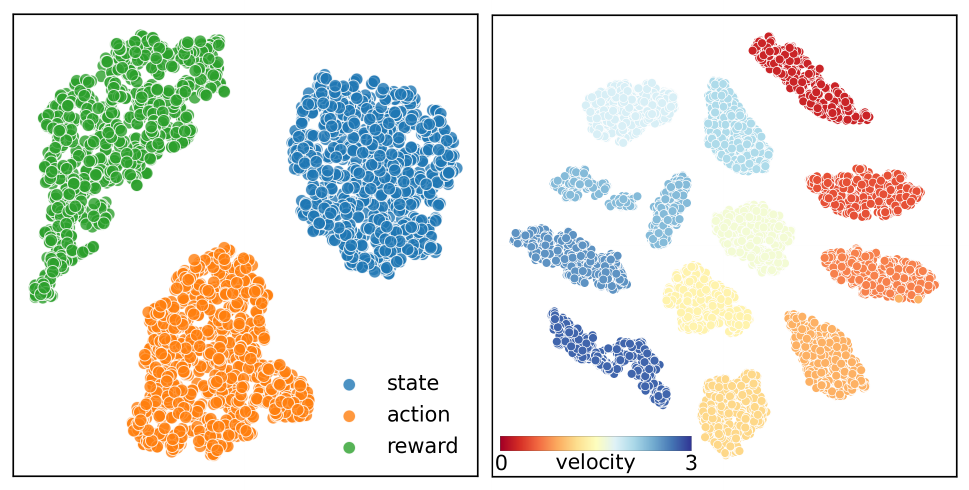}
% \end{center}
% \caption{
% TODO
% }
% \label{fig:intro}
% \end{figure}
% %%%%%%%%%%%%%%%%%%%%%%%%%%%%%%%%%%%%%%%%%%%%%%%%%%%%%%%%

% ================================================== %
% ===============   Introduction   ======-========== %
% ================================================== %
\section{Introduction}
\label{intro}
%%% First paragraph: RL, in-context RL
%%% Second paragraph: challenges of current in-context RL
%%% Third paragraph: the motivation for using MoE
%%% Fourth paragraph: our method
%%% Fifth paragraph: contributions 
%%% Challenge 1: Multi Modality
%%% Challenge 2: task diversity

% Deep reinforcement learning (RL) has achieved tremendous success in recent years, with applications in areas such as playing games, robotics, and fine-tuning large language models (LLMs). However, deep RL still faces two problems: data efficiency and generalization. Online RL typically requires extensive interactions with the environment to learn complex policies, resulting in low sample efficiency. To efficiently use limited data, offline reinforcement learning has been proposed, which trains a model on a fixed dataset without interacting with the environment. To generalize to new tasks, most offline meta-reinforcement learning methods~\citep{yuan2022robust, gao2023context, li2024towards} require a task representation to capture the underlying information from the dataset. Therefore, a few episodes of data are needed when generalizing to a new task.

Reinforcement learning (RL) is emerging as a powerful mechanism for training autonomous agents to solve complex tasks in interactive environments~\citep{mnih2015human,fawzi2022discovering}, unleashing its potential across frontier challenges including preference optimization~\citep{rafailov2023direct}, training diffusion models~\citep{black2024training}, and reasoning~\citep{cui2025entropy,yan2025learning,hu2025divide} such as OpenAI-o1~\citep{jaech2024o1} and DeepSeek-R1~\citep{guo2025deepseek}.
Recent studies have been actively exploring how to harness the in-context learning capabilities of the transformer architecture to achieve substantial improvements in RL's adaptability to downstream tasks through prompt conditioning without any model updates, i.e., in-context RL (ICRL)~\citep{nikulin2025xland}.
Current research in offline settings encompasses two primary branches, algorithm distillation (AD)~\citep{laskin2023context} and decision-pretrained transformer (DPT)~\citep{lee2023supervised}, owing to their simplicity and generality.
They share a common structure that uses across-episodic transitions as few-shot prompts to a transformer policy, and following-up studies continue to increase the in-context learning capacity by hierarchical decomposition~\citep{huang2024context}, noise distillation~\citep{zisman2024emergence}, model-based planning~\citep{son2025distilling}, dynamic programming updates~\citep{tarasov2025yes,liu2025scalable}, etc.~\citep{sinii2024context,dong2024incontext,dai2024context}.

Despite these efforts, two notable challenges remain in fully harnessing in-context learning within decision domains, as RL is notably more dynamic and complex than supervised learning.
The first is the intrinsic multi-modality of datasets.
In language or vision communities, the inputs to transformers are atomic words or pixels with consistent semantics in the representation space~\citep{kakogeorgiou2022hide}. 
In ICRL, the prompt inputs are typically transition samples containing three modalities of state, action, and reward with large semantic discrepancies.
States like physical quantities are usually continuous in nature in RL, actions like joint torques tend to be more high-frequency and less smooth, and rewards are simple scalars that can be sparse over long sequences~\citep{ajay2023conditional}.
The second is task diversity and heterogeneity. 
Compared to supervised learning, RL models are more prone to overfit the training set and struggle with generalization across diverse tasks~\citep{kirk2023survey}.
The learning efficiency can be hindered by intrinsic gradient conflicts in challenging scenarios where tasks vary significantly~\citep{liu2023improving}. 
\footnote{For example, given two navigation tasks where the goals are in adverse directions, the single RL model ought to execute completely opposite decisions under the same states for the two tasks.}
The above limitations raise a key question:
\textit{Can we design a scalable ICRL framework to tackle the multi-modality and diversified task distribution within a single transformer, advancing RL's in-context learning capacities one step closer to achievements in language and vision communities?}

In the era of large language models (LLMs), the mixture-of-experts (MoE) architecture~\citep{shazeer2017outrageously} has shown remarkable scaling properties and high transferability while managing computational costs, such as in Gemini~\citep{team2024gemini}, Llama-MoE~\citep{zhu2024llama}, and DeepSeekMoE~\citep{dai2024deepseekmoe}.
The architectural advancement also extends to various domains such as computer vision~\citep{fan2022m3vit}, image generation~\citep{xue2023raphael}, and RL~\citep{ceron2024mixtures,huang2024mentor}, highlighting the significant potential and promise of MoE models.
Intuitively, MoE architectures can serve as an encouraging remedy to tackle the two aforementioned bottlenecks in ICRL.
First, MoE enables different experts to process tokens with distinct semantics more effectively~\citep{zhang2024clip}.
This is a natural match for processing the multiple modalities within the state-action-reward sequence.
Second, MoE can alleviate gradient conflicts by dynamically allocating gradients to specialized experts for each input through the sparse routing mechanism~\citep{kong2025mastering}.
This property is promising for handling a broad task distribution with significant diversity and heterogeneous complexity.

% Inspired by this, we propose a novel approach called T2MIR, which demonstrates that MoE can assign different modalities to different experts and uses contrastive learning to enhance the ability to solve diverse tasks. In terms of architecture, we replace the feedforward layer in transformer with MoE to handle multi-modality input. We also design a contrastive learning mechanism added to MoE, aming at assigning different tasks to different experts. We evaluate the performance of T2MIR on a range of MuJoCo and visual RL tasks and empirically show that T2MIR outperforms current state-of-the-art methods \textcolor{orange}{TODO}. In summary, our main contributions are as follows: - We introduce MoE to strengthen the ability of ICRL to handle multi-modality input, including state vectors, action vectors (bounded between -1 and 1), and scalar return-to-go values. - We design a contrastive learning mechanism in MoE to enhance the ability to sense and process different tasks.

%%%%%%%%%%%%%%%%%%%%%%%%%%%%%%%%%%%%%%%%%%%%%%%%%%%%%%%%
\begin{wrapfigure}{r}{0.48\textwidth}
% \vspace{-0.5em}
\centering
\includegraphics[width=0.48\textwidth]{}
\caption{
t-SNE visualization of expert assignments on Cheetah-Vel where tasks differ in target velocities.
% Closer points are more likely to be assigned to the same experts.
Left: token-wise MoE enables different experts to process tokens with distinct semantics.
Right: task-wise MoE effectively manages a broad task distribution, where the difference between expert assignments is positively related to the difference between tasks.
}
% \vspace{-0.5em}
\label{fig:intro}
\end{wrapfigure}
%%%%%%%%%%%%%%%%%%%%%%%%%%%%%%%%%%%%%%%%%%%%%%%%%%%%%%%%

Inspired by this, we propose an innovative framework \textbf{T2MIR} (\textbf{T}oken- and \textbf{T}ask-wise \textbf{M}oE for \textbf{I}n-context \textbf{R}L), which for the first time harnesses architectural advances of MoE to develop more scalable and competent ICRL algorithms.
We substitute the feedforward layer in transformer blocks with two parallel ones: a token-wise MoE and a task-wise MoE. 
First, the token-wise MoE layer is responsible for automatically capturing distinct semantic features of input tokens within the multi-modal state-action-reward sequence.
We include a load-balancing loss and an importance loss to avoid tokens from all modalities collapsing onto minority experts. 
Second, the task-wise MoE layer is designed to assign diverse tasks to specialized experts with sparse routing, effectively managing a broad task distribution while alleviating gradient conflicts.
To facilitate task-wise routing, we introduce a contrastive learning method to maximize the mutual information between the task and its router representation in the MoE layer, enabling more precise capture of task-relevant information.
Finally, the outputs of the two parallel MoE components are concatenated and fed into the next layer.

In summary, our main contributions are threefold:
\begin{itemize}[itemsep=0.25em, leftmargin=1.25em]\vspace{-0.5em}
    \item We unleash RL's in-context learning capacities with a simple and scalable architectural enhancement. To our knowledge, we are the first to bring the potential and promise of MoE to ICRL.
    % We propose a scalable ICRL framework that leverages the MoE architecture to advance RL's in-context learning abilities. 

    \item We design a token-wise MoE to facilitate processing the distinct semantics within multi-modal inputs, and a task-wise MoE to tackle a diversified task distribution with reduced gradient conflicts.

    \item We build our method upon AD and DPT, and conduct extensive experiments on various benchmarks to show superiority over competitive baselines and visualize deep insights into performance gains.
    
    % performance gain on in-context capacity and superiority over a range of baselines.    

\end{itemize}

% ================================================== %
% ===============   Related Work   ================= %
% ================================================== %
\section{Related Work}\label{related}
% \textbf{In-context learning.} 
\textbf{In-Context RL.}
Tackling task generalization is a long-standing challenge in RL.
Early methods address this via the lens of meta-RL, including the memory-based (e.g., RL$^2$~\citep{duan2016rl} and LLIRL~\citep{wang2022lifelong,wang2023dirichlet}), the optimization-based (e.g., MAML~\citep{finn2017model} and MACAW~\citep{mitchell2021offline}), and the context-based (e.g., VariBAD~\citep{zintgraf2021varibad}, Meta-DT~\citep{wang2024meta}, etc.~\citep{li2024towards,zhang2025text}).
With the emergence of transformers that show remarkable in-context learning abilities~\citep{brown2020language}, the community has been exploring its potential to enhance RL's generalization via prompt conditioning without model updates~\citep{nikulin2025xland}.

Many ICRL methods have emerged, each differing in how to train and organize the context.
In online settings, AMAGO~\citep{grigsby2024amago,grigsby2024amago2} trains long-sequence transformers over entire rollouts with actor-critic learning, and \citep{lu2023structured} leverages the S4 (structured state space sequence) model to handle long-range sequences for ICRL tasks.
Classical studies in offline settings include AD~\citep{laskin2023context} and DPT~\citep{lee2023supervised}.
AD trains a causal transformer to predict actions given preceding histories as context, and DPT predicts the optimal action given a query state and a prompt of interactions. 
Follow-up studies enhance in-context learning from various algorithmic aspects~\citep{zisman2024emergence,sinii2024context,dong2024incontext,dai2024context,tarasov2025yes,liu2025scalable}, such as IDT with hierarchical structure~\citep{huang2024context} and DICP with model-based planning~\citep{son2025distilling}.
% Theoretical analysis also show that they can implement online RL algorithms such as Lin-UCB and Thompson sampling~\citep{lin2024transformers}, or even temporal-difference learning~\citep{wang2025transformers}.
Complementary to these \textit{algorithmic} studies, we introduce MoE to advance ICRL's development from an \textit{architectural} perspective.

\textbf{Mixture-of-Experts.}
The concept of MoE is first proposed in~\citep{jacobs1991adaptive}, which employs different expert networks plus a gating network that decides which experts should be used for each training case~\citep{jordan1994hierarchical}.
Then, it is applied to tasks of language modeling and machine translation~\citep{shazeer2017outrageously}, gradually showing prominent performance in scaling up transformer model size while managing computational costs~\citep{dai2022stablemoe,puigcerver2024sparse}.
Up to now, MoE architectures have become a standard component for advanced LLMs, such as Gemini~\citep{team2024gemini}, Llama-MoE~\citep{zhu2024llama}, and DeepSeekMoE~\citep{dai2024deepseekmoe}.
Recent efforts have explored extending MoE advancements to RL domains.
\citep{ceron2024mixtures} incorporates soft MoE modules into value-based RL agents, and performance improvements scale with the number of experts used.
\citep{huang2024mentor} uses an MoE backbone with a CNN encoder to process visual inputs and improves the policy learning ability to handle complex robotic environments.
\citep{kong2025mastering} strengthens decision transformers with MoE to reduce task load on parameter subsets, with a three-stage training mechanism to stabilize MoE training.
Accompanied by these encouraging efforts, we aim to harness the potential of MoE for ICRL.

% ================================================== %
% ===============   Preliminaries   ================ %
% ================================================== %
\section{Preliminaries}
\label{pre}
\subsection{In-Context Reinforcement Learning}
% RL is generally formulated as a Markov Decision Process (MDP) $M = \langle \mathcal{S}, \mathcal{A}, T, R, \gamma \rangle$, where $\mathcal{S}$ and $\mathcal{A}$ are the state and action space respectively, $T$ and $R$ are the transition dynamics and reward function respectively, and $\gamma$ is the discount factor. The objective of RL is to learn a policy $\pi$ that maximizes the expected return $J(\pi) = \mathbb{E}_{\tau\sim\pi}[\sum_{t=0}^T \gamma^t R(s_t, a_t)]$.

% In ICRL, we consider a multi-task RL setting, where tasks follows a distibution $M_i = \langle \mathcal{S}, \mathcal{A}, T_i, R_i, \gamma \rangle$. Tasks share the same state and action space, vary in reward function $R$ or transition dynamics $T$.
We consider a multi-task offline RL setting, where tasks follow a distribution $M_n \!=\! \langle \mathcal{S},\!\mathcal{A},\!\mathcal{T}_n, \mathcal{R}_n, \gamma \rangle$ $\sim P(M)$.
Tasks share the same state space $\mathcal{S}$ and action space $\mathcal{A}$, while differing in the reward function $\mathcal{R}$ or transition dynamics $\mathcal{T}$.
% For each task $i$ in total $N$ training tasks $\{M\}_{i=1}^N$, a replay buffer dataset $\mathcal{D}_i = \{(s_j^i, a_j^i, r_j^i, s'^i_j)\}_{j=1}^J$ is collected by a RL algorithm, where $J$ denotes the total transitions in the dataset.
An offline dataset $\mathcal{D}^n\!=\!\{(s_j^n, a_j^n, r_j^n, s_j^{'n})\}_{j=1}^J$ is collected by arbitrary behavior policies for each task out of a total of $N$ training tasks.
The agent can only access the offline datasets $\{\mathcal{D}^n\}_{n=1}^N$ to train an in-context policy as $\pi\left(a^n|s^n; \tau^n_{\text{pro}}\right)$, where $\tau^n_{\text{pro}}$ is a trajectory prompt that encodes task-relevant information.
% At test time, given an unseen task $M\sim P(M)$, the meta-policy can generalize to the new task by interacting with the environment and extracting task information from the cumulative episode experience. In thie manner, the meta-policy doesn't need to provide a dataset for the test task and can be enhanced by collected experience by itself.
At test time, the trained policy is evaluated on unseen tasks sampled from $P(M)$ by directly interacting with the environment.
The prompt is initially empty and gradually constructed from history interactions.
The objective is to learn an in-context policy to maximize the expected episodic return over test tasks as $J(\pi)=\mathbb{E}_{M\sim P(M)}[J_M(\pi)]$.

\subsection{Mixture-of-Experts}
% different parts of a model, called experts, specialize in different tasks or different aspects of a task.

% MENTOR
% Outrageously Large Neural Networks-The Sparsely-Gated Mixture-of-Experts Layer
% A standard sparse Mixture of Experts (MoE) layer consists of $N$ experts $\{E_1, E_2, \ldots, E_N\}$ and a router $G$. The router predicts a distribution over $N$ experts given the input $x$. Following the settings of~\citep{shazeer2017}, based on this distribution, only top-$k$ experts will be activated and process the input $x$. In general, the number of activated experts is fixed and much smaller than the total number of experts, thus remaining large parameters and reducing the computational cost at the same time.
A standard sparse MoE layer consists of $K$ experts $\{E_1,...,E_K\}$ and a router $G$.
The router predicts an assignment distribution over the experts given the input $x$.
Following the common practice~\citep{shazeer2017outrageously,huang2024mentor}, we only activate the top-$k$ experts to process the inputs.
In general, the number of activated experts is fixed and much smaller than the total number of experts, thus scaling up model parameters and significantly reducing computational cost.
Formally, the output of the MoE layer can be written as
\begin{equation}\label{eq:moe}
w(i;x) = \text{softmax}\left(\text{topk}(G(x))\right)[i],~~~~y = \sum\nolimits_{i=1}^K w(i;x) E_i(x),
\end{equation}
where $\text{topk}(\cdot)$ selects top $k$ experts based on the router output $G(x)$.
$\text{softmax}(\cdot)$ normalizes top-$k$ values into a weight distribution $w(\cdot)$.
Probabilities of non-selected experts are set to 0.

% ================================================== %
% ===================   Method   =================== %
% ================================================== %
\section{Method}
\label{method}

%%%%%%%%%%%%%%%%%%%%%%%%%%%%%%%%%%%%%%%%%%%%%%%%%%%%%%%%
\begin{figure}[tb]
\begin{center}
\includegraphics[width=0.98\textwidth]{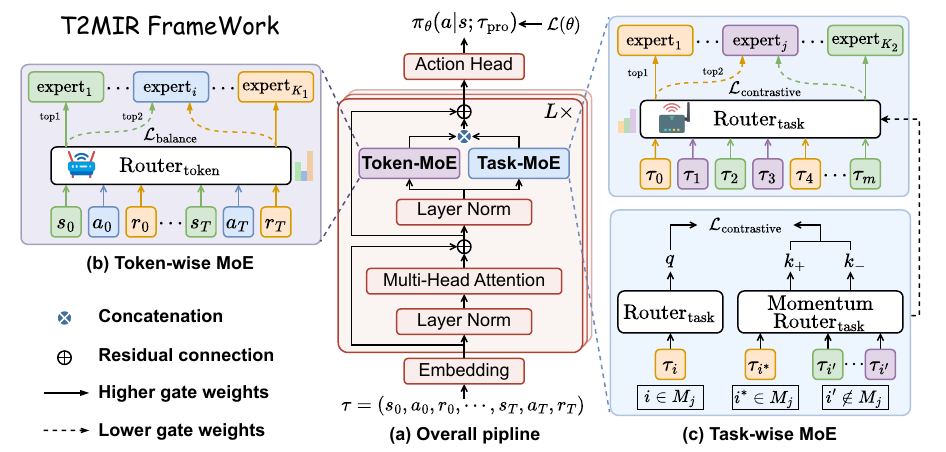}
\end{center}
\caption{
The overview of T2MIR.
% (a) \textbf{Causal Tansformer Blocks.}
(a) \textbf{Overall pipeline}: we substitute the feedforward layer in causal transformer blocks with two parallel MoE layers and concatenate their outputs to feed into the next layer.
% The inputs of transformer are cross-episode transition batches $\{s_0,a_0,r_0, \ldots, s_T, a_t, r_T\}_0^B$, and we predict actions $\hat{a}$ condition on the hidden state of states $s$.
(b) \textbf{Token-wise MoE}: it automatically captures distinct semantic features within the multi-modal $(s,a,r)$ inputs, and uses $\mathcal{L}_\text{balance}$ as regularization loss to avoid tokens from all modalities collapsing onto minority experts.
(c) \textbf{Task-wise MoE}: it assigns diverse tasks to specialized experts, and includes a contrastive learning loss $\mathcal{L}_\text{contrastive}$ to enhance task-wise routing via more precise capture of task-relevant information, where $\tau_i$ is the query and $\tau_{i^*} / \tau_{i'}$ are positive/negative keys.
% The task-wise MoE utilizes a momentum router to compute InfoNCE loss, which is momentum updated.
% For simplicity, we use $s$, $a$, $r$ and $\tau$ to represent the hidden states and the mean hidden states of one batch after layer norm in (b) and (c) respectively.
}
\label{fig:T2MIR}
\end{figure}
%%%%%%%%%%%%%%%%%%%%%%%%%%%%%%%%%%%%%%%%%%%%%%%%%%%%%%%%

In this section, we present \textbf{T2MIR} (\textbf{T}oken- and \textbf{T}ask-wise \textbf{M}oE for \textbf{I}n-context \textbf{R}L), an innovative framework that harnesses architectural advancements of MoE to tackle ICRL challenges of the multi-modality and diversified task distribution.
Figure~\ref{fig:T2MIR} illustrates the method overview, followed by detailed implementations.
Algorithm pseudocodes are presented in Appendix~\ref{app:alg}.

% \subsection{Token-wise MoE}
\subsection{Token-wise MoE}\label{method:token_moe}

% Since RL problem can be treated as a sequence decision making problem~\citep{chen2021decision}, in which the MDP is modeled as a sequence of tokens like $\{s_1, a_1, r_1, s_2, a_2, r_2,..., s_T, a_T, r_T\}$. By leveraging the superiority of the attention mechanism, transformer is a powerful tool to model the markov decision process. In ICRL, recent works have shown that transformer has the ability to model the sequence data and capture the underlying information from historical episode inputs.
% RL is a sequential decision-making problem based on the Markov decision process (MDP) formulation.
By interacting with the outer environment, we gather the data represented as a sequence of state-action-reward transitions in the form of $\tau\!=\!(s_0,a_0,r_0,...,s_T,a_T,r_T)$, which often serves as the prompt input to ICRL models.
% In our work, we consider continuous environments, where the state and action spaces are continuous, and reward is a scalar value. The state $s$ may be a vector containing the physical quantities of the agent, such as position, velocity, and acceleration. In visual RL settings, the state $s$ is often an image, as the observation captured by the agent. The action $a$ is a scalar vector representing the physical quantities that control the agent, such as joint torques. And the reward $r$ is a scalar value meausuring the agent's performance in the current step. Therefore, the input of ICRL is naturally a sequence data of multiple modalities.
The state is usually continuous in nature, which can contain physical quantities of the agent (e.g., position, velocity, and acceleration) or an image in visual RL~\citep{huang2024mentor}.
The action tends to be more high-frequency and less smooth, such as joint torques or discrete commands.
The reward is a simple scalar that is often sparse over long horizons. 
In language or vision communities, the inputs to transformers are atomic words or pixels with consistent semantics in the representation space~\citep{kakogeorgiou2022hide}. 
In contrast, ICRL encounters a new challenge of processing the prompt data that encompasses three modalities with significant semantic discrepancies.

% Since the computational cost of training the model is very expensive due to the fact that as the number of model parameters increases, MoE architecture is widely used in the field of large models. Inspired by the surprising discoveries that the experts in MoE can capture different semantic features of the input tokens~\citep{dai-etal-2022-stablemoe, zhang2024clipmoebuildingmixtureexperts}, we introduce a token-wise MoE to handle multi-modality inputs.
Inspired by the surprising discoveries that the MoE structures can effectively capture distinct semantic features of input tokens~\citep{dai2022stablemoe,zhang2024clip}, we introduce a token-wise MoE layer to tackle the multiple modalities within the state-action-reward sequence.
% As the input of ICRL is constructed as $\{s_1, a_1, r_1, s_2, a_2, r_2,..., s_T, a_T, r_T\}$, which consists of three modalities: state, action, and reward, we replace the feedforward layer in transformer with MoE to flexibly handle the three-modality input. As illustrated in~\ref{fig:T2MIR}, the MoE architecture consists of multiple experts and a router which determines the expert to use for each token.
As shown in Figure~\ref{fig:T2MIR}-(b), each element in the multi-modal sequence corresponds to a token.
The MoE consists of $K_1$ experts $E_{\text{tok}}$ and a router $G_{\text{tok}}$, and the router learns to assign each token to specific experts with noisy top-$k$ gating.
% load-balancing 的原理就在这个可学习的噪声
% We use $h_i$ to represent the hidden state of the $i$-th token after self-attention calculation, the router $G_1$ computes a probalility distribution $\{G_1(j|h_i)\}_{j=1}^{K_1}$ over the experts for the $i$-th token. The top-$k_1$ experts will be selected to process the $i$-th token, and the output of the top-$k_1$ experts will be weighted as the final output as in Eq.~(\ref{eq:moe}).
Let $h$ denote the hidden state of a given token after self-attention calculation. 
The token router $G_{\text{tok}}$ computes a distribution of logits $\{G_{\text{tok}}(i|h)\}_{i=1}^{K_1}$ for expert selection. 
The top-$k$ experts will be selected to process this token, and their outputs will be weighted to produce the final output $y_{\text{tok}}$ as 
\begin{equation}\label{eq:token_moe}
w_{\text{tok}}(i; h) = \text{softmax}\left(\text{topk}(G_{\text{tok}}(i|h))\right)[i],~~~~~~~~
y_{\text{tok}} = \sum\nolimits_{i=1}^{K_1} w_{\text{tok}}(i; h)\cdot E_{\text{tok}}(i|h).
\end{equation}

% In order to avoid collapsing all tokens to a few experts, we follow load-balancing loss and importance loss introduced in~\citep{shazeer2017}.
%TODO Load 的计算有点复杂, 需要写吗
% \begin{equation}
%     Impoertance(X) = \sum_{x\in X} G_1(x)
% \end{equation}
% \begin{equation}\label{eq:l_balance}
%     \mathcal{L}_\text{balance} = w_\text{load}\cdot CV(Load(X))^2 + w_\text{importance}\cdot CV(Importance(X))^2
% \end{equation}
% where $w_\text{load}$ and $w_\text{importance}$ are hyper-parameters to control the weight of load-balancing loss and importance loss, $CV$ represents the coefficient of variation.
The router tends to converge on producing large weights for the same few experts, as the favored experts are trained more rapidly and thus are selected even more by the router.
In order to avoid tokens from all modalities collapsing onto minority experts, we incorporate a regularization term to balance expert utilization that consists of an importance loss and a load-balancing loss~\citep{shazeer2017outrageously} as 
\begin{equation}\label{eq:l_balance}
    \mathcal{L}_\text{balance} = w_{\text{imp}}\cdot CV(\text{Imp}(h))^2 + w_\text{load}\cdot CV(\text{Load}(h))^2,
\end{equation}
where $CV(\cdot)$ denotes the coefficient of variation, and $w_{\text{imp}}$ and $w_\text{load}$ are hand-tuned scaling factors. 
The first item encourages all experts to have equal importance as defined by $\text{Imp}(h) = \sum_{h} G_{\text{tok}}(\cdot|h)$.
The second item encourages experts to receive roughly equal numbers of training examples, where $\text{Load}(\cdot)$ is a smooth estimator of the number of examples assigned to each expert for a batch of inputs.
A more detailed description of the balance loss can be found in Appendix~\ref{app:balance_loss}.

\subsection{Task-wise MoE}
% A key obstacle in generalization to new tasks is how to capture the task-relevant information from offline datasets. In the typical meta-RL setting, one policy is trained on a lot of tasks and the policy is asked to generalize to new tasks. Therefore, how to process the different or even completely conflicting information from offline datasets is a critical problem.
In typical ICRL settings, a single policy model is trained across multiple tasks.
The learning efficiency can be impeded by intrinsic gradient conflicts in challenging scenarios with significant task variation.
% Taking 2D navigation as an example, suppose there is a target point in the first quadrant, and also a target point symmetric about the origin in the third quadrant, the transition pairs are conflict in the two tasks. Specifically, identical reward signals will correspond to completely opposite state action pairs. In this way, using just one policy may struggle to tackle diverse tasks.
For instance, given two navigation tasks where the goals are in adverse directions, the policy ought to make contrary decisions under the same states.
Given the same state-action trajectories, the two tasks can produce exactly opposite policy gradients, leading to a sum of zero gradient during training.

% Constrained by the limitations in handle all tasks by just one policy, we design a contrastive MoE aiming at assigning different tasks to different experts. The router determines which expert to use for each token. From the task perspective, the router can be treated as an encoder extracting task-relevant information from the hidden states of tokens. Therefore, we introduce a contrastive loss to train the router at the task level.
The MoE structure was originally proposed to use different parts of a model, called experts, to specialize in different tasks or different aspects of a task~\citep{jacobs1991adaptive,jordan1994hierarchical}.
Drawing from this natural inspiration, we introduce a task-wise MoE layer to handle a broad task distribution with significant diversity and heterogeneous complexity, leveraging modular expert learning to alleviate gradient conflicts.
% As same as token-wise MoE, the task-wise MoE is also inserted into the transformer to replace the feedforward layer.
% Let $h_{s_t}$ represent the hidden state after self-attention of the $t$-th state token at time step $t$, and $h_{a_t}$, $h_{r_t}$ represent the hidden state of the $t$-th action token, and reward token respectively. The router $G_2$ maps the hidden states of the three tokens to a probability distribution over the experts. At the task level, we ignore the different modalities and compute the mean distribution of all three-modality tokens in a task batch.
As shown in Figure~\ref{fig:T2MIR}-(c), the MoE consists of $K_2$ experts $E_{\text{task}}$ and a router $G_{\text{task}}$, and the router learns to assign a sequence of tokens to specialized experts \textit{at the task level}.
Given a trajectory $\tau\!=\!(s_0,a_0,r_0,...,s_T,a_T,r_T)$ from some task $M$, we use $\bar{h}$ to denote the average hidden state of all three-modality tokens after self-attention calculation.
The task router $G_{\text{task}}$ computes a distribution of logits $\{G_{\text{task}}(i|\bar{h})\}_{i=1}^{K_2}$ for expert selection. 
The top-$k$ experts will be selected to process these tokens, and their outputs will be weighted to produce the final output $y_{\text{task}}$ as 
\begin{equation}\label{eq:task_moe}
w_{\text{task}}(i; \bar{h}) = \text{softmax}\left(\text{topk}(G_{\text{task}}(i|\bar{h}))\right)[i],~~~~~~~~
y_{\text{task}} = \sum\nolimits_{i=1}^{K_2} w_{\text{task}}(i; \bar{h})\cdot E_{\text{task}}(i|\bar{h}).
\end{equation}

% A key obstacle in RL generalization to new tasks is how to accurately capture task-relevant information from offline datasets.
\textbf{Task-wise Routing via Contrastive Learning.} 
The task-wise router can be viewed as an encoder for extracting task representations from input tokens, based on the intuition that similar tasks are preferred to the same expert and tasks with notable differences are allocated to disparate experts. 
An ideally discriminative task representation should accurately capture task-relevant information from offline datasets, while remaining invariant to behavior policies or other unrelated factors.
To achieve this goal, we propose to maximize the mutual information between the task and its router representation, enhancing task-wise routing that minimally preserves task-irrelevant information.

Formally, the mutual information $I(\cdot)$ aims to quantify the uncertainty reduction of one random variable when the other is observed, measuring their mutual dependency.
Mathematically, we formalize the router $G_{\text{task}}$ as a probabilistic encoder $z\!\sim\! G_{\text{task}}(\cdot|\bar{h})$, where $z$ denotes the task representation.
The $\bar{h}$ distribution is determined by its task $M$, where $M\!\sim\!P(M)$.
The objective for the router is:
\begin{equation}\label{eq:mi}
\max I(z;M)=\mathbb{E}_{z,M}\left[ \log\frac{p(M|z)}{p(M)} \right].
\end{equation}

% Directly optimizing mutual information is intractable in practice.
Drawing inspiration from noise contrastive estimation (InfoNCE)~\citep{oord2018representation} in the contrastive learning literature~\citep{laskin2020curl,hu2024attention}, we derive a lower bound of Eq.~(\ref{eq:mi}) using the following theorem.
\begin{theorem}\label{theorem:contrastive}
    Let $\mathcal{M}$ denote a set of tasks following the task distribution $P(M)$, and $|\mathcal{M}|\!=\!N$.
    $M\!\in\! \mathcal{M}$ is a given task.
    Let $\bar{h}\!=\!f(\tau)$, $z\!\sim\! G_{\text{task}}(\cdot|\bar{h})$, and $e(\bar{h},z)\!=\!\frac{p(z|\bar{h})}{p(z)}$, where $\tau$ is a trajectory from task $M$ and $\bar{h}$ is average hidden state of all tokens after the self-attention calculation function $f(\cdot)$.
    Let $\bar{h}'$ denote the average hidden state generated by any task $M'\!\in\!\mathcal{M}$, then we have
    \begin{equation}\label{mi_bound_main}
        I(z; M) \ge \log N +  \mathbb{E}_{\mathcal{M},z,\bar{h}}\left[\log\frac{e(\bar{h},z)}{\sum_{M'\in\mathcal{M}}e(\bar{h}',z)}\right].
    \end{equation}
\end{theorem}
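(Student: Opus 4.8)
The plan is to adapt the noise-contrastive (InfoNCE) argument of~\citep{oord2018representation}. Since the target lower bound is $\log N$ minus a conditional-entropy-type term, the first move is to recast $I(z;M)=\mathbb{E}_{z,M}[\log\frac{p(M|z)}{p(M)}]$ as an $N$-way task-identification problem: among the representations $\{\bar{h}'\}_{M'\in\mathcal{M}}$ generated by the $N$ tasks in $\mathcal{M}$, exactly one, namely $\bar{h}=f(\tau)$ from the true task $M$, is the ``positive'' from which $z\sim G_{\text{task}}(\cdot|\bar{h})$ was drawn, while the remaining ``distractor'' representations come from tasks drawn i.i.d.\ from $P(M)$ independently of $z$. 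Let $Y$ denote the (uniform) index of the positive task and $O=(z,\{\bar{h}'\}_{M'\in\mathcal{M}})$ the full observation, so that $H(Y)=\log N$ and the quantity we are after is $I(Y;O)=H(Y)-H(Y\mid O)$, which we then aim to compare with $I(z;M)$.

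The key computation is to evaluate the Bayes-optimal posterior $p(Y\mid O)$. Using $e(\bar{h},z)=p(z\mid\bar{h})/p(z)$ and the fact that the distractor representations are drawn from the marginal $p(\bar{h})$ independently of $z$, Bayes' rule yields $p(Y=M'\mid O)=e(\bar{h}',z)/\sum_{M''\in\mathcal{M}}e(\bar{h}'',z)$; taking the logarithm of the term at the true task and averaging gives precisely $-H(Y\mid O)$, which is the bracketed expression in Eq.~(\ref{mi_bound_main}). Two standard ingredients close the argument: (i) the identity $\mathbb{E}_{\bar{h}'\sim p(\bar{h})}[e(\bar{h}',z)]=\int p(\bar{h}')\,\frac{p(z\mid\bar{h}')}{p(z)}\,d\bar{h}'=1$, which controls the denominator of the posterior; and (ii) concavity of $\log$ / Jensen's inequality, used either to replace $\sum_{M'\neq M}e(\bar{h}',z)$ by its mean $N-1$, or, more cleanly, to package the whole step through the variational NCE bound, which holds for any critic and is tight for $e$.

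I expect the main obstacle to be the final linkage $I(Y;O)\le I(z;M)$. The InfoNCE manipulation most naturally produces a bound on $I(z;\bar{h})$, the mutual information with the \emph{representation}, whereas Eq.~(\ref{mi_bound_main}) is phrased with the \emph{task} $M$; bridging these requires exploiting that $\bar{h}=f(\tau)$ with $\tau$ drawn from $M$ and that in the contrastive objective $\bar{h}$ serves purely as a task-identifying key, so that the distractors carry no information about $Y$ beyond $z$ and the dependency relevant to $Y$ factors through $M$. Making this precise (via the Markov/exchangeability structure $M\to\tau\to\bar{h}\to z$ and independence of the distractor draws) is where the care is needed; the Bayes and averaging steps themselves are routine. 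If one accepts a slightly looser statement, the CPC-style shortcut — approximate $\sum_{M'\neq M}e(\bar{h}',z)\approx N-1$ directly and lower-bound $\log(1+(N-1)/e(\bar{h},z))$ — reproduces Eq.~(\ref{mi_bound_main}) up to an $O(1/N)$ correction.
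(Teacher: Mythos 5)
Your overall route is the same as the paper's: the appendix proof is exactly the CPC-style manipulation you describe in your last sentence — write the bracketed term as $-\log\bigl(1+\tfrac{p(z)}{p(z|\bar h)}\sum_{M'\neq M}\tfrac{p(z|\bar h')}{p(z)}\bigr)$, replace the distractor sum by $(N-1)\,\mathbb{E}_{M'}[e(\bar h',z)]=(N-1)$ (your ingredient (i)), and then lower/upper bound to peel off $\log N$. Note that the paper performs this replacement with an explicit $\approx$, so the $O(1/N)$-correction caveat you attach to the "shortcut" is not avoided there either; your instinct that this step is only an approximation matches what the paper actually does.

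The one place where your proposal stops short is precisely the step you flag as "where the care is needed": passing from a representation-level quantity to $I(z;M)$. The paper closes this with a one-line lemma that you should make explicit: by Bayes' rule and the Markov structure $M\to\bar h\to z$ (i.e.\ $p(z|\bar h,M)=p(z|\bar h)$),
\begin{equation}
\frac{p(M|z)}{p(M)}=\frac{p(z|M)}{p(z)}=\int \frac{p(z|\bar h)\,p(\bar h|M)}{p(z)}\,\mathrm{d}\bar h=\mathbb{E}_{\bar h\sim p(\cdot|M)}\!\left[\frac{p(z|\bar h)}{p(z)}\right].
\end{equation}
Combined with Jensen's inequality in the form $\mathbb{E}_{\bar h}[\log e(\bar h,z)]\le\log\mathbb{E}_{\bar h}[e(\bar h,z)]$, this converts the surviving term $\mathbb{E}[\log\tfrac{p(z|\bar h)}{Np(z)}]$ into $\mathbb{E}[\log\tfrac{p(M|z)}{p(M)}]-\log N=I(z;M)-\log N$, which is the entire bridge you were looking for; no separate argument about the distractors "carrying no information about $Y$ beyond $z$" or a comparison $I(Y;O)\le I(z;M)$ is needed once this identity is in hand. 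So your $N$-way classification framing is a fine (and arguably cleaner) way to interpret the bound, but to match the paper you should replace the unfinished data-processing linkage with the identity above plus Jensen; as written, that linkage is the genuine gap in your proposal.
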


Appendix~\ref{app:theorem} presents the complete proof. 
Since direct evaluation of $p(z)$ or $p(z|\bar{h})$ is intractable, we employ NCE and importance sampling techniques that compare the target value with randomly sampled negative ones.
Consequently, we approximate $e$ using the exponential of a score function, a similarity metric between the latent codes of two examples.
Given a batch of $m$ trajectories $(\tau_1,...,\tau_m)$, we denote router representation $z_i$ from $\tau_i$ as the query $q$, and representations of other trajectories as the keys $\mathbb{K}\!=\!\{z_1,...,z_m\}\backslash z_i$.
Points from the same task with the query, $i\!\in\! M_j$, are set as positive key $\{k_+\}$ and those from different tasks are set as negative $\{k_-\} \!=\! \mathbb{K}\backslash \{k_+\}$.
We adopt bilinear products~\citep{laskin2020curl} as the score function, with similarities between the query and keys computed as $q^\top W k$, where $W$ is a learnable parameter matrix.
Then, we formulate a sampling-based version of the tractable lower bound as the InfoNCE loss for the contrastive router as
\begin{equation}
    \mathcal{L}_{\text{contrastive}} \!=\!-\log \frac{\exp(q^\top W k_+)}{\!\exp(q^\top Wk_+)\!+\!\exp(q^\top Wk_-)} 
    \!=\! -\log \frac{\sum\nolimits_{i^*\in M_j}\exp(z_i^\top Wz_{i^*})}{\!\!\!\!\sum\limits_{i^*\in M_j}\!\!\!\!\exp(z_i^\top Wz_{i^*})\!+\!\!\!\!\!\sum\limits_{i'\notin M_j}\!\!\!\!\exp(z_i^\top Wz_{i'})}\!\!
\label{nce_loss2}
\end{equation}
It optimizes an $N$-way classification loss to classify the positive pair among all pairs, equivalent to maximizing a lower bound on mutual information in Eq.~(\ref{mi_bound_main}), with this bound tightening as $N$ gets larger. 
% The router should extract shared features of the same task to maximize positive pair scores, while capturing key distinctions across different tasks to minimize negative pair scores.
Following MoCo~\citep{he2020momentum}, we maintain a query router $G_{\text{task}}^q$ and a key router $G_{\text{task}}^k$, and use a momentum update with coefficient $\beta \!\in\! [0,1)$ to ensure the consistency of the key representations as
\begin{equation}\label{soft_update}
    G_{\text{task}}^k \gets \beta G_{\text{task}}^k + (1-\beta) G_{\text{task}}^q,
\end{equation}
where only query parameters $G_{\text{task}}^q$ are updated through backpropagation.

\subsection{Scalable Implementations}
% We verify our T2MIR architecture on different backbones, including Algorithm Distillation (AD) and Decision-Pretrained Transformer (DPT).
% Given the offline datasets $\{\mathcal{D}_i\}_{i=1}^N$ collected by a RL algorithm, we first split the dataset into a lot of episodes, and then model the MDP as a sequence of tokens $\{s_0, a_0, r_0, s_1, a_1, r_1, ..., s_T, a_T, r_T\}$.
% For brief, we consider the integrated transformer model with MoE as policy model $g_\theta$ with parameters $\theta$, including the gate and expert networks.
% We describe the implementation details of the two structures built on AD and DPT in the following sections, and we refer to the resulting models as AD-MoE and DPT-MoE respectively.
We develop scalable implementations of T2MIR using two mainstream ICRL backbones that offer promising simplicity and generality: AD~\citep{laskin2023context} and DPT~\citep{lee2023supervised}, yielding two T2MIR variants as follows.

\textbf{T2MIR-AD.}
% As AD distills the learning process of original RL algorithm, we sort the dataset by return for each task.
% For each training step, we sample $H$ episodes from one task and sort the episodes by return.
% By predicting the current action $a_{t}$ conditioned on the current state $s_t$ and the previous history $h_{t-1}$, which we refer to $\{s_1, a_1, r_1, ..., s_{t-1}, a_{t-1}, r_{t-1}\}$, we train the policy network $g_\theta$ to minimize the error between the predicted actions and the ground-truth actions:
% \begin{equation}\label{eq:l_prediction_ad}
%     \mathcal{L}_\text{prediction} = \sum_{b=1}^B\sum_{t=1}^T \text{loss\_func}\left(g_\theta(\hat{a}_t | s_t, h_{t-1}), a_t\right)
% \end{equation}
% % 统一一下起始是从 0 开始还是从 1 开始, 这里 T 和 DPT 的 T 含义不一样
% where $\text{loss\_func}$ is cross-entropy loss when the action space is discrete, and MSE loss otherwise.
% By treating learning to reinforcement learn as an across-episode sequential prediction problem, AD trains a causal transformer to autoregressively predict actions given preceding learning histories as context, distilling the policy improvement operator.
% Let $\theta$ denote the parameters of our MoE-augmented transformer policy, and $his_t$ denote the long learning history as 
It trains an MoE-augmented causal transformer policy $\pi_{\theta}$ to autoregressively predict actions given preceding learning histories $his_t$ as the prompt:
\begin{equation}
    his_t := (s_0,a_0,r_0,...,s_{t-1},a_{t-1},r_{t-1},s_t,a_t,r_t)=(s_{\leq t}, a_{\leq t}, r_{\leq t}).
\end{equation}
During training, we minimize a negative log-likelihood loss over $N$ individual tasks as 
\begin{equation}\label{eq:ad_loss}
    \mathcal{L}(\theta)= -\sum\nolimits_{n=1}^N\sum\nolimits_{t=0}^{T-1}\log \pi_{\theta}\left(a=a_t^n \mid s_t^n;~~\tau_{\text{pro}}^n\right),~~~\text{where}~\tau_{\text{pro}}^n:=his_{t-1}^n,
\end{equation}
which can be derived as a cross-entropy loss for discrete actions, and a mean-square error (MSE) loss when the action space is continuous.
% We use the same causal transformer architecture in Amago~\citep{grigsby2024amago} with flash-attention and encoder the position of tokens by timestep $t$, i.e., we use same position embedding for $s_t$, $a_t$, and $r_t$.
Akin to AMAGO~\citep{grigsby2024amago}, we adopt Flash Attention~\citep{dao2022flashattention} to enable long context lengths on a single GPU.
For the context, we use the same position embedding for $s_t$, $a_t$, and $r_t$ to maintain semantics of temporal consistency.
% At test time, the policy model $g_\theta$ is allowed to interact with the environment and output actions autoregressively. We maintain a buffer to store the interaction history, which contains $H$ episodes as training time. The buffer is sorted by return and used as prompt to produce actions of next episode. And then the lowest-return episode is substituted by the newest episode.
% At test time, the policy $\pi_{\theta}$ is allowed to interact with the environment, using a buffer to store the interaction history that contains the same number of episodes as during training. The stored episodes are sorted by the return in ascending order, and are used as the prompt to predict actions for the next episode autoregressively. 

\textbf{T2MIR-DPT.}
% DPT samples some independent transitions and stacks the elements of transtion tuple into one vector as prompt, and position encoding is not necessary in DPT.
% Unlike this, we sample one episode as prompt and model the transition tuple as a sequence of tokens $\{s_1, a_1, r_1, s_2, a_2, r_2, ..., s_T, a_T, r_T\}$ from the multi-modality perspective.
% Given a query state $s$ and prompt episode, we train the policy network $g_\theta$ to minimize the error between the predicted actions and the target action $a^*$ of the query state $s$:
% \begin{equation}\label{eq:l_prediction_dpt}
%     \mathcal{L}_\text{prediction} = \sum_{b=1}^B\sum_{t=1}^{3T} \text{loss\_func}\left(g_\theta(\hat{a} | s, h_t), a^*\right)
% \end{equation}
% where $h_t$ represents the up to $t$-th token in the prompt episode, which has a total length of $3T$.
% $\text{loss\_func}$ is cross-entropy loss when the action space is discrete, and MSE loss otherwise.
% We use GPT-2 transformer model as $g_\theta$, and use same position embedding for $s_t$, $a_t$, and $r_t$.
% Formulated with a supervised pretraining paradigm, DPT trains a transformer policy to predict an optimal action given a query state and an in-context dataset of interactions as the prompt.
It trains a transformer policy to predict an optimal action given a query state and a prompt of interaction transitions.
In practice, we randomly sample a trajectory $\tau_{\text{pro}}$ from the dataset $\mathcal{D}$ as the task prompt for each query-label pair $(s_t,a_t^*)$, and train the policy to minimize the loss as
\begin{equation}\label{eq:dpt_loss}
    \mathcal{L}(\theta)= -\sum\nolimits_{n=1}^N\sum\nolimits_{t=0}^{T}\log \pi_{\theta}\left(a=a_t^{n*} \mid s_t^n;~~\tau^n_{\text{pro}} \right),~~~\text{where}~\tau_{\text{pro}}^n\sim \mathcal{D}^n,
\end{equation}
Different from AD in Eq.~(\ref{eq:ad_loss}), DPT requires labeling optimal actions for query states to construct the offline dataset $\sum_n\sum_t(s_t^n,a_t^{n*})$, inheriting principles from pure imitation learning.
% At test time, the policy model $g_\theta$ interacts with the environment given the current state $s_t$ as query state and an episode as prompt.
% For online test, the prompt is the latest episode obtained by interacting with the environment, which is empty for the first episode.
% For offline test, we sample an episode from the dataset and use it as prompt.
% At test time, the policy $\pi_{\theta}$ interacts with the environment and use an in-context buffer to store the interaction history. At each timestep, it predicts the action based on the current state as the query and a prompt episode randomly sampled from the buffer.

% ================================================== %
% ===============   Experiments   ================== %
% ================================================== %
\section{Experiments}\label{experiments}
We comprehensively evaluate our methods on popular benchmarking domains using various qualities of offline datasets.
In general, we aim to answer the following research questions:
\begin{itemize}[itemsep=0.25em, leftmargin=1.25em]\vspace{-0.5em}
% \item Can T2MIR achieve consistent performance when faced with unseen tasks? We compare it to various strong baselines from different paradigms.(Sec.~\ref{exp:main})
\item Can T2MIR demonstrate consistent superiority on in-context learning capacity when tested in unseen tasks, compared to various strong baselines? (Sec.~\ref{exp:main})

% \item How does each component of T2MIR affect performance? We study the effect of adding the token-wise MoE and task-wise MoE on AD and DPT architectures, respectively. (Sec.~\ref{exp:ablation})
\item How do the token-wise and the task-wise MoE layers affect T2MIR's performance, respectively (Sec.~\ref{exp:ablation})? 
We also gain deep insights into each component via visualizations in Sec.~\ref{sec:exp_vis}.

% \item Is T2MIR robust to different dataset qualities and different hyperparameters?
\item How robust is T2MIR across diverse settings?
% We evaluate T2MIR against baselines on various dataset qualities, including Medium-Expert and Medium datasets.
We construct offline datasets with varying qualities to extensively evaluate T2MIR and baselines. 
Also, we investigate T2MIR's sensitivity to various hyperparameters, such as the expert configuration and the loss ratio. (Sec.~\ref{exp:robust} and Appendix~\ref{app:hyper_analysis})
\end{itemize}

\textbf{Environments.}
We evaluate T2MIR on four benchmarks that are widely used in multi-task settings:
i) the discrete environment $\texttt{DarkRoom}$, which is a grid-world environment with multiple goals;
ii) the 2D navigation environment $\texttt{Point-Robot}$, which aims to navigate to a target position in a 2D plane;
iii) the multi-task MuJoCo locomotion control environment, containing $\texttt{Cheetah-Vel}$ and $\texttt{Walker-Param}$;
iv) the Meta-World manipulation platform, including $\texttt{Reach}$ and $\texttt{Push}$.
We construct three datasets with different qualities: \texttt{Mixed}, \texttt{Medium-Expert}, and \texttt{Medium}.
Appendix~\ref{app:environment} presents more details about environments and dataset construction.

\textbf{Baselines.}
We compare \texttt{T2MIR} to five competitive baselines, including four ICRL methods $\texttt{AD}$~\citep{laskin2023context}, $\texttt{DPT}~\citep{lee2023supervised}$, \texttt{IDT}~\citep{huang2024context}, \texttt{DICP}~\citep{son2025distilling}, and a context-based offline meta-RL approach $\texttt{UNICORN}$~\citep{li2024towards}.
More details about baselines are given in Appendix~\ref{app:baselines}.
As DICP has three implementations, we use the best results among them in our experiments.

For all experiments, we conduct evaluations across five random seeds.
The mean of the obtained return is plotted as the bold line, with 95\% bootstrapped confidence intervals of the mean results indicated by the lighter shaded regions.
Appendix~\ref{app:implementation} gives implementation details of T2MIR-AD and T2MIR-DPT.
Appendix~\ref{app:hyper_analysis} presents comprehensive hyperparameter analysis, including expert selection in token-wise and task-wise MoE, InfoNCE loss ratio, and the positioning of MoE layers.
%Appendix~\ref{app:hyper_analysis} shows hyperparameter analysis on expert selections of token-wise MoE and task-wise MoE, and InfoNCE loss weight.
%And we also study the position of MoE layer in T2MIR architecture in Appendix~\ref{app:hyper_analysis}.

\subsection{Main Results}\label{exp:main}

%%%%%%%%%%%%%%%%%%%%%%%%%%%%%%%%%%%%%%%%%%%%%%%%%%%%%%%%
\begin{figure}[b]
\vspace{-1em}
\begin{center}
\includegraphics[width=0.98\textwidth]{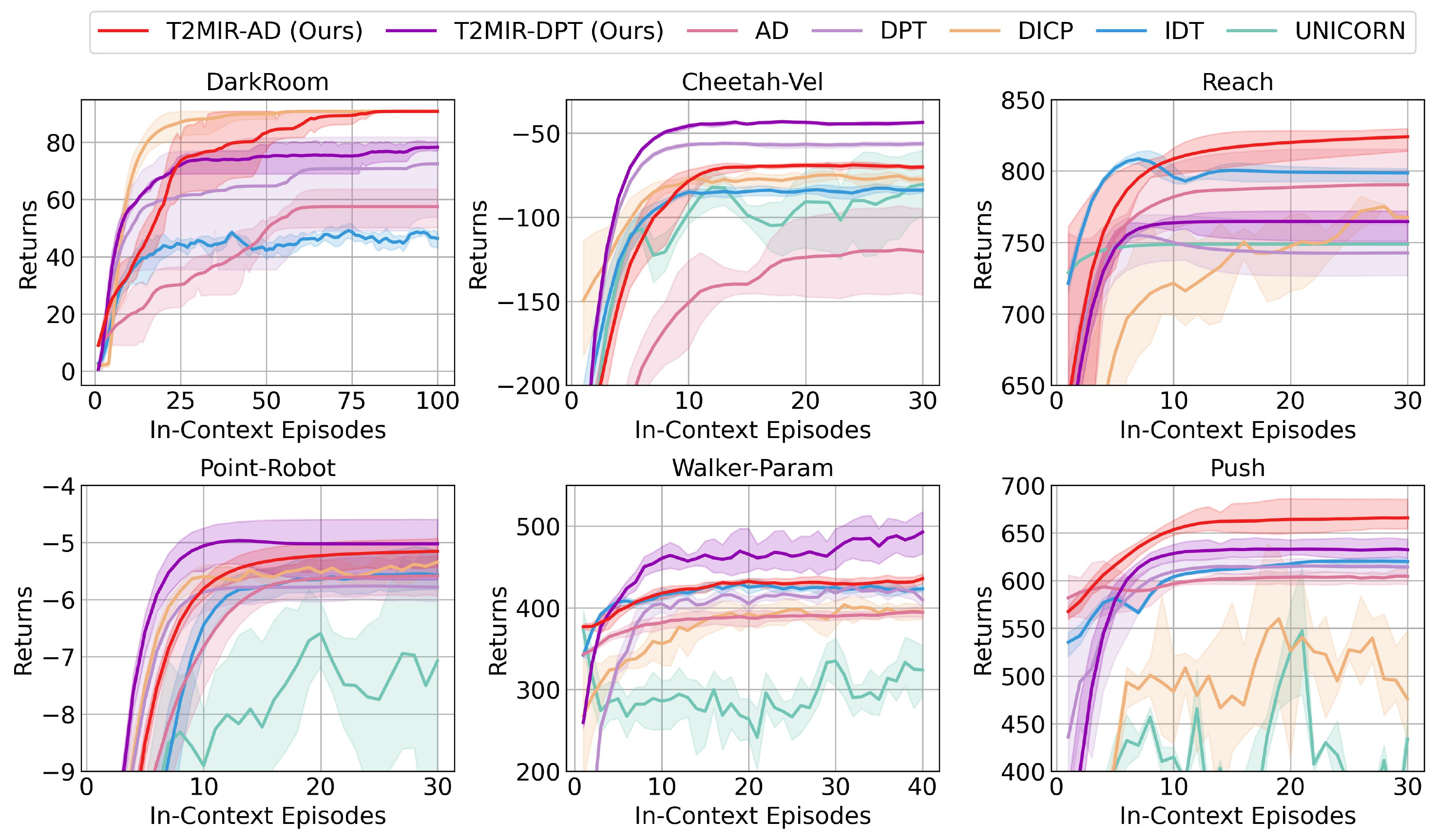}
\end{center}
\caption{
%Test return curves of T2MIR against baselines using Mixed datasets.
Test return curves of two T2MIR implementations against baselines using Mixed datasets.
}
\label{fig:mix}
\end{figure}
%%%%%%%%%%%%%%%%%%%%%%%%%%%%%%%%%%%%%%%%%%%%%%%%%%%%%%%%

We compare our method against various baselines under an aligned evaluation setting, where prompts or contexts are generated from online environment interactions to infer task beliefs during testing.
%where policy is not allowed to access to test datasets or update parameters while evaluation.
%We evaluate all methods by interacting 100 episodes with environments.
%The prompts of all methods are generated by interacting with the environments.
% For a fair comparison, we modify UNICORN to align it with the common setting where pre-collected datasets are not available at test time.
Figure~\ref{fig:mix} and Table~\ref{tab:mix} present test return curves and numerical results of the best performance of T2MIR and baselines using Mixed datasets.
% In our unified settings, both T2MIR-AD and T2MIR-DPT consistently achieve more stable performance, and the best performance varies between T2MIR-AD and T2MIR-DPT in different environments.
In these diverse environments with varying reward functions or transition dynamics, both T2MIR-AD and T2MIR-DPT achieve significantly superior performance regarding the in-context learning speed and final asymptotic results compared to the strong baselines.
In most cases, our two implementations take the top two rankings for the best and second-best performance, showcasing comparably strong generalization abilities to unseen tasks.
% It highlights the effectiveness of our T2MIR architecture using the two mainstream ICRL backbones.
% T2MIR also shows faster improvement speed by interacting with environments and the best performance varies between T2MIR-AD and T2MIR-DPT.
It highlights the effectiveness of introducing MoE architectures to the two mainstream ICRL backbones.
% Due to the limited observation space, which is well-suited for Model Predictive Control (MPC) in action selection, DICP converges faster than T2MIR-AD in the DarkRoom environment, achieving comparable asymptotic performance. But DICP straggles when facing more complicated environments like Reach and Push, where it is more difficult to find high return trajectories in limited number of steps.
Notably, DICP exhibits a faster learning speed and comparable asymptotic performance compared to our T2MIR in DarkRoom.
In this environment with a small state-action space, DICP naturally enables more efficient policy search via look-ahead model-based planning.
Then, DICP struggles when facing more complicated environments like Reach and Push, as it is more difficult to find high-return trajectories when planning in a large space.
% Furthermore, both T2MIR-AD and T2MIR-DPT typically demonstrate lower variance and show strong stability, while some methods have fluctuating and even declining performance in challenge environments such as Walker-Param and Push.
In contrast, the superiority of T2MIR is more pronounced in harder problems, underscoring its potential to tackle challenging ICRL tasks.

%%%%%%%%%%%%%%%%%%%%%%%%%%%%%%%%%%%%%%%%%%%%%%%%%%%%%%%%
\begin{table*}[t]
\caption{
Test return of T2MIR against baselines using Mixed datasets, i.e., numerical results of the best performance from Figure~\ref{fig:mix}. Best result in \textbf{bold} and second best \underline{underline}. 
}
\centering
\footnotesize
\setlength{\tabcolsep}{2.5pt}
\begin{tabular}{l|cc|ccccc}
\toprule
Environments & \textbf{T2MIR-AD} & \textbf{T2MIR-DPT} & AD & DPT & DICP & IDT & UNICORN \\
\midrule
DarkRoom & $\textbf{90.9} \scalebox{1.1}{\tiny{$\pm$}} \scalebox{1.1}{\tiny{0.0}}$ & $\underline{78.3} \scalebox{1.1}{\tiny{$\pm$}} \scalebox{1.1}{\tiny{1.4}}$ & $57.5 \scalebox{1.1}{\tiny{$\pm$}} \scalebox{1.1}{\tiny{5.6}}$ & $72.5 \scalebox{1.1}{\tiny{$\pm$}} \scalebox{1.1}{\tiny{13.1}}$ & $\textbf{90.9} \scalebox{1.1}{\tiny{$\pm$}} \scalebox{1.1}{\tiny{0.0}}$ & $49.2 \scalebox{1.1}{\tiny{$\pm$}} \scalebox{1.1}{\tiny{2.8}}$ & --- \\
Point-Robot & $\underline{-5.2} \scalebox{1.1}{\tiny{$\pm$}} \scalebox{1.1}{\tiny{0.3}}$ & $\textbf{-5.0} \scalebox{1.1}{\tiny{$\pm$}} \scalebox{1.1}{\tiny{0.5}}$ & $-5.6 \scalebox{1.1}{\tiny{$\pm$}} \scalebox{1.1}{\tiny{0.4}}$ & $-5.8 \scalebox{1.1}{\tiny{$\pm$}} \scalebox{1.1}{\tiny{0.3}}$ & $-5.3 \scalebox{1.1}{\tiny{$\pm$}} \scalebox{1.1}{\tiny{0.2}}$ & $-5.5 \scalebox{1.1}{\tiny{$\pm$}} \scalebox{1.1}{\tiny{0.5}}$ & $-6.6 \scalebox{1.1}{\tiny{$\pm$}} \scalebox{1.1}{\tiny{0.6}}$ \\
Cheetah-Vel & $-68.9 \scalebox{1.1}{\tiny{$\pm$}} \scalebox{1.1}{\tiny{2.1}}$ & $\textbf{-43.2} \scalebox{1.1}{\tiny{$\pm$}} \scalebox{1.1}{\tiny{0.8}}$ & $-119.2 \scalebox{1.1}{\tiny{$\pm$}} \scalebox{1.1}{\tiny{30.4}}$ & $\underline{-56.1} \scalebox{1.1}{\tiny{$\pm$}} \scalebox{1.1}{\tiny{1.0}}$ & $-74.9 \scalebox{1.1}{\tiny{$\pm$}} \scalebox{1.1}{\tiny{3.2}}$ & $-82.8 \scalebox{1.1}{\tiny{$\pm$}} \scalebox{1.1}{\tiny{3.0}}$ & $-80.6 \scalebox{1.1}{\tiny{$\pm$}} \scalebox{1.1}{\tiny{22.8}}$ \\
Walker-Param & $\underline{435.7} \scalebox{1.1}{\tiny{$\pm$}} \scalebox{1.1}{\tiny{7.2}}$ & $\textbf{492.8} \scalebox{1.1}{\tiny{$\pm$}} \scalebox{1.1}{\tiny{29.3}}$ & $395.2 \scalebox{1.1}{\tiny{$\pm$}} \scalebox{1.1}{\tiny{7.1}}$ & $425.9 \scalebox{1.1}{\tiny{$\pm$}} \scalebox{1.1}{\tiny{9.7}}$ & $403.7 \scalebox{1.1}{\tiny{$\pm$}} \scalebox{1.1}{\tiny{15.4}}$ & $429.4 \scalebox{1.1}{\tiny{$\pm$}} \scalebox{1.1}{\tiny{5.8}}$ & $372.8 \scalebox{1.1}{\tiny{$\pm$}} \scalebox{1.1}{\tiny{24.6}}$ \\
Reach & $\textbf{823.9} \scalebox{1.1}{\tiny{$\pm$}} \scalebox{1.1}{\tiny{7.1}}$ & $764.6 \scalebox{1.1}{\tiny{$\pm$}} \scalebox{1.1}{\tiny{9.3}}$ & $790.4 \scalebox{1.1}{\tiny{$\pm$}} \scalebox{1.1}{\tiny{20.2}}$ & $754.9 \scalebox{1.1}{\tiny{$\pm$}} \scalebox{1.1}{\tiny{15.0}}$ & $775.3 \scalebox{1.1}{\tiny{$\pm$}} \scalebox{1.1}{\tiny{1.1}}$ & $\underline{808.5} \scalebox{1.1}{\tiny{$\pm$}} \scalebox{1.1}{\tiny{3.6}}$ & $748.8 \scalebox{1.1}{\tiny{$\pm$}} \scalebox{1.1}{\tiny{0.1}}$ \\
Push & $\textbf{665.8} \scalebox{1.1}{\tiny{$\pm$}} \scalebox{1.1}{\tiny{13.9}}$ & $\underline{633.2} \scalebox{1.1}{\tiny{$\pm$}} \scalebox{1.1}{\tiny{8.6}}$ & $604.7 \scalebox{1.1}{\tiny{$\pm$}} \scalebox{1.1}{\tiny{6.3}}$ & $615.4 \scalebox{1.1}{\tiny{$\pm$}} \scalebox{1.1}{\tiny{6.0}}$ & $560.0 \scalebox{1.1}{\tiny{$\pm$}} \scalebox{1.1}{\tiny{64.2}}$ & $620.6 \scalebox{1.1}{\tiny{$\pm$}} \scalebox{1.1}{\tiny{4.6}}$ & $547.4 \scalebox{1.1}{\tiny{$\pm$}} \scalebox{1.1}{\tiny{67.2}}$ \\
\bottomrule
\end{tabular}
\label{tab:mix}
\end{table*}
%%%%%%%%%%%%%%%%%%%%%%%%%%%%%%%%%%%%%%%%%%%%%%%%%%%%%%%%

\subsection{Ablation Study}\label{exp:ablation}
% 消融实验：AD 的一般 T2MIR > w/o task > w/o token > ad，这个多模态 moe 的效果提升明显，可能是因为 horizon 长，token-moe 的作用明显
% DPT 的一般 T2MIR > w/o token > w/o task > DPT，这个对比学习的 moe 效果提升明显，多模态 moe 的效果没那么多
% To validate the effectiveness of our design, We compare T2MIR to three ablation variants, with the MoE layer positioned at the top of transformer layers and equal activated parameters: 
We compare T2MIR to three ablations: 1) \texttt{w/o Task-MoE}, it only retains the token-wise MoE layer and regularizes the router with balance loss in Eq.~(\ref{eq:l_balance}); 2) \texttt{w/o Token-MoE}, it only retains the task-wise MoE layer and enhances the router with contrastive loss in Eq.~(\ref{nce_loss2}); and 3) \texttt{w/o all}, it degrades to vanilla AD and DPT.
%is equal to AD and DPT in T2MIR-AD and T2MIR-DPT settings respectively.
We keep the same amount of activated parameters in our T2MIR and three ablations for a fair comparison.
Figure~\ref{fig:ablation} and Table~\ref{tab:ablation} present test return curves and numerical results of ablation study on T2MIR-AD and T2MIR-DPT.

% By effectively capturing task-relevant patterns and moderating gradient conflicts, the task-wise MoE better balances multi-tasks, as shown by T2MIR surpassing \texttt{w/o Task-MoE}, while preserving only task-MoE in \texttt{w/o Token-MoE} shows clear advantages over \texttt{w/o all}.
First, both T2MIR architectures show decreased performance when any MoE is removed, with the worst occurring when both MoE layers are excluded.
It demonstrates that the two kinds of MoE designs are essential for T2MIR's capability and they complement each other.
Second, with the AD backbone, ablating the token MoE incurs a more significant performance drop than ablating the task MoE, indicating that the token MoE plays a more vital role for T2MIR-AD.
AD takes long training histories as context, which can contain redundant trajectories for identifying the underlying task, akin to the memory-based meta-RL approach RL$^2$~\citep{duan2016rl}.
Hence, using the token MoE to handle multi-modality within the long-horizon context may yield greater benefits.
Third, the situation is reversed with the DPT backbone, as the task MoE is more essential for T2MIR-DPT.
DPT takes a small number of transitions as the task prompt, placing greater emphasis on accurately identifying task-relevant information from limited data.
% Additionally, token-wise MoE enhances feature extraction capabilities by intelligently routing different modality tokens through specialized experts, as evidenced by T2MIR significantly outperforming \texttt{w/o Token-MoE}, while preserving only token-MoE in \texttt{w/o Task-MoE} shows clear advantages over \texttt{w/o all}.
% The consistent performance improvements across both AD and DPT frameworks demonstrate the general effectiveness of our proposed dual-MoE architecture.

%%%%%%%%%%%%%%%%%%%%%%%%%%%%%%%%%%%%%%%%%%%%%%%%%%%%%%%%
\begin{figure}[hb]
\begin{center}
\includegraphics[width=0.98\textwidth]{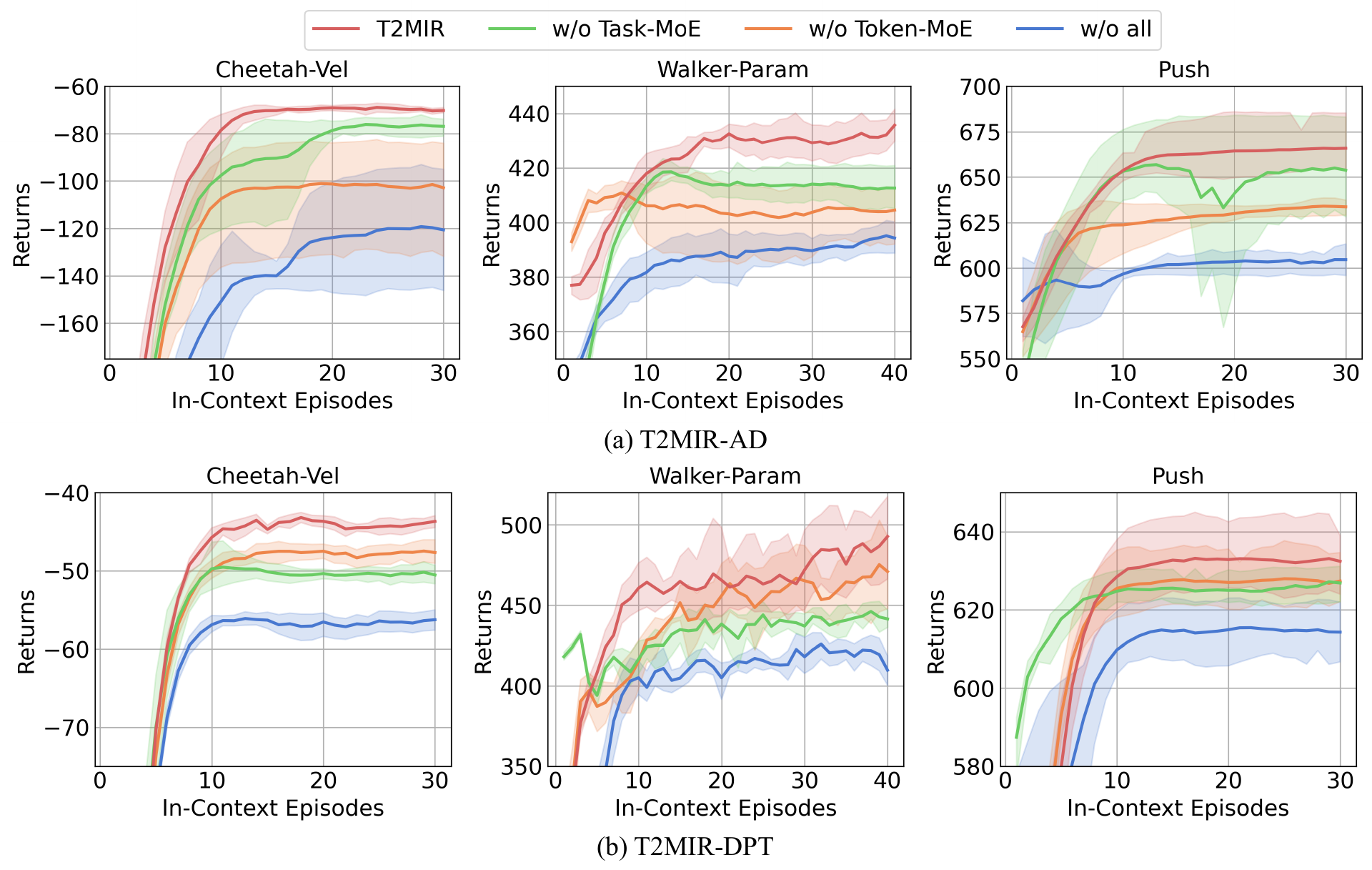}
\end{center}
\caption{
Ablation results of both T2MIR-AD and T2MIR-DPT architectures using Mixed datasets.
% where variants examine the impact of removing task-wise MoE (\texttt{w/o Task-MoE}), token-wise MoE (\texttt{w/o Token-MoE}), and all MoE components (\texttt{w/o all}).
% Ablation results on T2MIR-AD and T2MIR-DPT using Mixed datasets.
% \texttt{w/o Task-MoE} only uses Token-wise MoE, \texttt{w/o Token-MoE} only uses Task-wise MoE, \texttt{w/o all} means doesn't use any MoE layer.
}
\label{fig:ablation}
\end{figure}
%%%%%%%%%%%%%%%%%%%%%%%%%%%%%%%%%%%%%%%%%%%%%%%%%%%%%%%%

%%%%%%%%%%%%%%%%%%%%%%%%%%%%%%%%%%%%%%%%%%%%%%%%%%%%%%%%
\begin{table*}[t]
\caption{
Numerical results of ablation study on both T2MIR-AD and T2MIR-DPT using Mixed datasets, i.e., the best performance from Figure~\ref{fig:ablation}.
% $\texttt{w/o Task-MoE}$ omits task-wise MoE, $\texttt{w/o Token-MoE}$ omits token-wise MoE, and $\texttt{w/o all}$ removes both MoE architectures.
Best result in \textbf{bold}.
}
\centering
\small
\setlength{\tabcolsep}{6.0pt}
\begin{tabular}{l|cccc}
\toprule
T2MIR-AD & \textbf{T2MIR} & w/o Task-MoE & w/o Token-MoE & AD \\
\midrule
Cheetah-Vel & $\textbf{-68.9} \scalebox{1.1}{\tiny{$\pm$}} \scalebox{1.1}{\tiny{2.1}}$ & $-76.1 \scalebox{1.1}{\tiny{$\pm$}} \scalebox{1.1}{\tiny{3.7}}$ & $-101.1 \scalebox{1.1}{\tiny{$\pm$}} \scalebox{1.1}{\tiny{29.7}}$ & $-119.2 \scalebox{1.1}{\tiny{$\pm$}} \scalebox{1.1}{\tiny{30.4}}$ \\
Walker-Param & $\textbf{435.7} \scalebox{1.1}{\tiny{$\pm$}} \scalebox{1.1}{\tiny{7.2}}$ & $418.6 \scalebox{1.1}{\tiny{$\pm$}} \scalebox{1.1}{\tiny{3.9}}$ & $410.8 \scalebox{1.1}{\tiny{$\pm$}} \scalebox{1.1}{\tiny{5.1}}$ & $395.2 \scalebox{1.1}{\tiny{$\pm$}} \scalebox{1.1}{\tiny{7.1}}$ \\
Push & $\textbf{665.8} \scalebox{1.1}{\tiny{$\pm$}} \scalebox{1.1}{\tiny{13.9}}$ & $656.8 \scalebox{1.1}{\tiny{$\pm$}} \scalebox{1.1}{\tiny{17.8}}$ & $634.0 \scalebox{1.1}{\tiny{$\pm$}} \scalebox{1.1}{\tiny{4.0}}$ & $604.7 \scalebox{1.1}{\tiny{$\pm$}} \scalebox{1.1}{\tiny{6.3}}$ \\
\cmidrule[\heavyrulewidth]{1-5}
T2MIR-DPT & \textbf{T2MIR} & w/o Task-MoE & w/o Token-MoE & DPT \\
\midrule
Cheetah-Vel & $\textbf{-43.2} \scalebox{1.1}{\tiny{$\pm$}} \scalebox{1.1}{\tiny{0.8}}$ & $-49.5 \scalebox{1.1}{\tiny{$\pm$}} \scalebox{1.1}{\tiny{3.4}}$ & $-47.5 \scalebox{1.1}{\tiny{$\pm$}} \scalebox{1.1}{\tiny{1.6}}$ & $-56.1 \scalebox{1.1}{\tiny{$\pm$}} \scalebox{1.1}{\tiny{1.0}}$ \\
Walker-Param & $\textbf{492.8} \scalebox{1.1}{\tiny{$\pm$}} \scalebox{1.1}{\tiny{29.3}}$ & $446.1 \scalebox{1.1}{\tiny{$\pm$}} \scalebox{1.1}{\tiny{6.5}}$ & $475.3 \scalebox{1.1}{\tiny{$\pm$}} \scalebox{1.1}{\tiny{32.5}}$ & $425.9 \scalebox{1.1}{\tiny{$\pm$}} \scalebox{1.1}{\tiny{9.7}}$ \\
Push & $\textbf{633.2} \scalebox{1.1}{\tiny{$\pm$}} \scalebox{1.1}{\tiny{8.6}}$ & $627.2 \scalebox{1.1}{\tiny{$\pm$}} \scalebox{1.1}{\tiny{4.0}}$ & $628.0 \scalebox{1.1}{\tiny{$\pm$}} \scalebox{1.1}{\tiny{7.2}}$ & $615.4 \scalebox{1.1}{\tiny{$\pm$}} \scalebox{1.1}{\tiny{6.0}}$ \\
\bottomrule
\end{tabular}
\label{tab:ablation}
\end{table*}
%%%%%%%%%%%%%%%%%%%%%%%%%%%%%%%%%%%%%%%%%%%%%%%%%%%%%%%%

% \subsection{Robustness to the Quality of Offline Datasets}\label{exp:robust}
% To validate T2MIR's robustness against various dataset qualities, we compare T2MIR against baselines using Medium-Expert and Medium datasets.
% We use the same settings as in Mixed datasets, where policy is not allowed to access to test datasets and update parameters while evaluation.
% Figure~\ref{fig:robust} shows the test return curves of T2MIR against baselines and the numerical results are shown in Table~\ref{tab:robust}.
% Since the best performances of RL algorithms used to collect Medium-Expert and Medium datasets are not as good as Mixed, there is a decline between these datasets and Mixed dataset.
% It is notable that T2MIR shows consistent improvement against their corresponding backbones.

%\subsection{Analysis of Offline Datasets Quality and the Number of Experts}\label{exp:analysis}

%%%%%%%%%%%%%%%%%%%%%%%%%%%%%%%%%%%%%%%%%%%%%%%%%%%%%%%%
\begin{figure}[t]
\begin{center}
\includegraphics[width=0.98\textwidth]{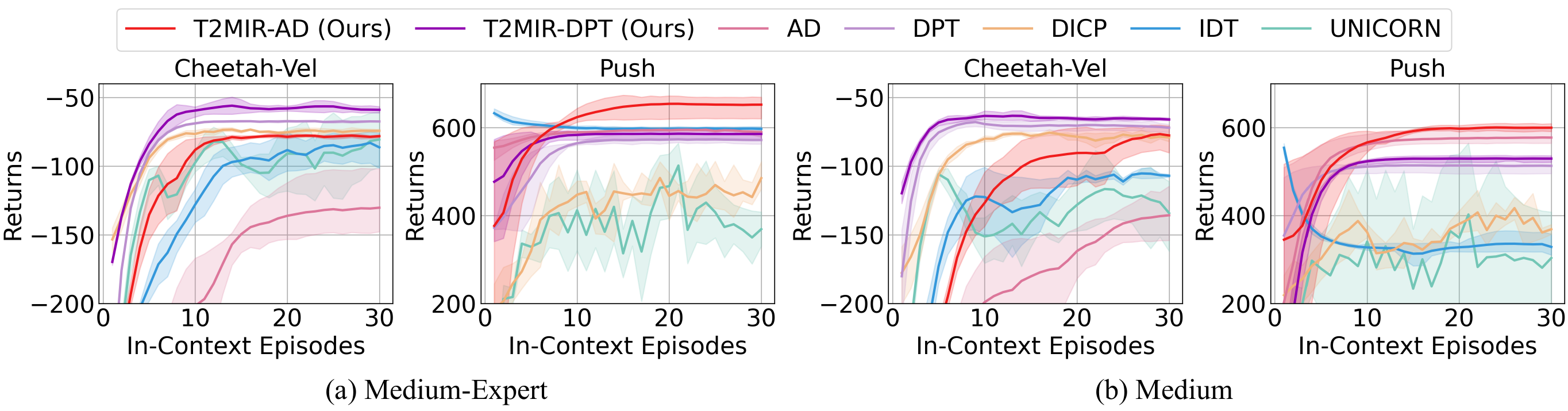}
\end{center}
\caption{
%Zero-shot test performance of robustness to data quality where T2MIR is compared against baselines using Medium-Expert and Medium datasets.
Test return curves of T2MIR against baselines using Medium-Expert and Medium datasets.
}
\label{fig:robust}
\end{figure}
%%%%%%%%%%%%%%%%%%%%%%%%%%%%%%%%%%%%%%%%%%%%%%%%%%%%%%%%

%%%%%%%%%%%%%%%%%%%%%%%%%%%%%%%%%%%%%%%%%%%%%%%%%%%%%%%%
\begin{table*}[!t]
\caption{
Numerical results of T2MIR against baselines using Medium-Expert (top) and Medium (bottom) datasets, i.e., numerical results of best performance from Figure~\ref{fig:robust}.
Best result in \textbf{bold} and second best \underline{underline}.
}
\centering
\footnotesize
\setlength{\tabcolsep}{2.0pt}
\begin{tabular}{l|cc|ccccc}
\toprule
Medium-Expert & \textbf{T2MIR-AD} & \textbf{T2MIR-DPT} & AD & DPT & DICP & IDT & UNICORN \\
\midrule
% Point-Robot & $-7.3 \scalebox{1.1}{\tiny{$\pm$}} \scalebox{1.1}{\tiny{0.4}}$ & $\underline{-6.9} \scalebox{1.1}{\tiny{$\pm$}} \scalebox{1.1}{\tiny{0.8}}$ & $-8.6 \scalebox{1.1}{\tiny{$\pm$}} \scalebox{1.1}{\tiny{0.6}}$ & $-6.9 \scalebox{1.1}{\tiny{$\pm$}} \scalebox{1.1}{\tiny{0.9}}$ & $\textbf{-6.7} \scalebox{1.1}{\tiny{$\pm$}} \scalebox{1.1}{\tiny{0.1}}$ & $-8.0 \scalebox{1.1}{\tiny{$\pm$}} \scalebox{1.1}{\tiny{0.4}}$ & $-9.4 \scalebox{1.1}{\tiny{$\pm$}} \scalebox{1.1}{\tiny{0.8}}$ & \\
Cheetah-Vel & $-77.8 \scalebox{1.1}{\tiny{$\pm$}} \scalebox{1.1}{\tiny{1.5}}$ & $\textbf{-56.0} \scalebox{1.1}{\tiny{$\pm$}} \scalebox{1.1}{\tiny{6.8}}$ & $-130.3 \scalebox{1.1}{\tiny{$\pm$}} \scalebox{1.1}{\tiny{24.2}}$ & $\underline{-67.3} \scalebox{1.1}{\tiny{$\pm$}} \scalebox{1.1}{\tiny{0.6}}$ & $-73.4 \scalebox{1.1}{\tiny{$\pm$}} \scalebox{1.1}{\tiny{1.6}}$ & $-83.1 \scalebox{1.1}{\tiny{$\pm$}} \scalebox{1.1}{\tiny{14.4}}$ & $-80.6 \scalebox{1.1}{\tiny{$\pm$}} \scalebox{1.1}{\tiny{22.8}}$ \\
Push & $\textbf{654.4} \scalebox{1.1}{\tiny{$\pm$}} \scalebox{1.1}{\tiny{24.3}}$ & $586.3 \scalebox{1.1}{\tiny{$\pm$}} \scalebox{1.1}{\tiny{9.5}}$ & $595.4 \scalebox{1.1}{\tiny{$\pm$}} \scalebox{1.1}{\tiny{3.5}}$ & $573.6 \scalebox{1.1}{\tiny{$\pm$}} \scalebox{1.1}{\tiny{8.5}}$ & $485.5 \scalebox{1.1}{\tiny{$\pm$}} \scalebox{1.1}{\tiny{29.2}}$ & $\underline{632.9} \scalebox{1.1}{\tiny{$\pm$}} \scalebox{1.1}{\tiny{7.1}}$ & $513.7 \scalebox{1.1}{\tiny{$\pm$}} \scalebox{1.1}{\tiny{51.1}}$ \\
\cmidrule[\heavyrulewidth]{1-8}
Medium & \textbf{T2MIR-AD} & \textbf{T2MIR-DPT} & AD & DPT & DICP & IDT & UNICORN \\
\midrule
% Point-Robot & $-8.6 \scalebox{1.1}{\tiny{$\pm$}} \scalebox{1.1}{\tiny{1.0}}$ & $\textbf{-8.3} \scalebox{1.1}{\tiny{$\pm$}} \scalebox{1.1}{\tiny{0.5}}$ & $-10.0 \scalebox{1.1}{\tiny{$\pm$}} \scalebox{1.1}{\tiny{1.1}}$ & $\underline{-8.5} \scalebox{1.1}{\tiny{$\pm$}} \scalebox{1.1}{\tiny{0.6}}$ & $-8.6 \scalebox{1.1}{\tiny{$\pm$}} \scalebox{1.1}{\tiny{0.3}}$ & $-10.7 \scalebox{1.1}{\tiny{$\pm$}} \scalebox{1.1}{\tiny{0.4}}$ & $-10.6 \scalebox{1.1}{\tiny{$\pm$}} \scalebox{1.1}{\tiny{0.2}}$ & \\
Cheetah-Vel & $-76.6 \scalebox{1.1}{\tiny{$\pm$}} \scalebox{1.1}{\tiny{11.4}}$ & $\textbf{-63.3} \scalebox{1.1}{\tiny{$\pm$}} \scalebox{1.1}{\tiny{4.0}}$ & $-135.8 \scalebox{1.1}{\tiny{$\pm$}} \scalebox{1.1}{\tiny{19.2}}$ & $\underline{-67.9} \scalebox{1.1}{\tiny{$\pm$}} \scalebox{1.1}{\tiny{3.2}}$ & $-75.6 \scalebox{1.1}{\tiny{$\pm$}} \scalebox{1.1}{\tiny{2.5}}$ & $-105.3 \scalebox{1.1}{\tiny{$\pm$}} \scalebox{1.1}{\tiny{4.1}}$ & $-105.9 \scalebox{1.1}{\tiny{$\pm$}} \scalebox{1.1}{\tiny{3.7}}$ \\
Push & $\textbf{600.5} \scalebox{1.1}{\tiny{$\pm$}} \scalebox{1.1}{\tiny{6.8}}$ & $530.0 \scalebox{1.1}{\tiny{$\pm$}} \scalebox{1.1}{\tiny{6.7}}$ & $\underline{577.7} \scalebox{1.1}{\tiny{$\pm$}} \scalebox{1.1}{\tiny{16.8}}$ & $514.2 \scalebox{1.1}{\tiny{$\pm$}} \scalebox{1.1}{\tiny{12.7}}$ & $416.9 \scalebox{1.1}{\tiny{$\pm$}} \scalebox{1.1}{\tiny{20.9}}$ & $554.3 \scalebox{1.1}{\tiny{$\pm$}} \scalebox{1.1}{\tiny{9.8}}$ & $402.4 \scalebox{1.1}{\tiny{$\pm$}} \scalebox{1.1}{\tiny{162.9}}$ \\
\bottomrule
\end{tabular}
\label{tab:robust}
\end{table*}
%%%%%%%%%%%%%%%%%%%%%%%%%%%%%%%%%%%%%%%%%%%%%%%%%%%%%%%%

\subsection{Robustness Study}\label{exp:robust}

% \textbf{Robustness to the quality of offline datasets.}
\textbf{Robustness to Dataset Qualities.}
We evaluate T2MIR on Medium-Expert and Medium datasets.
% As shown in Figure~\ref{fig:robust}, despite the inherent performance gap between these datasets and Mixed datasets (due to the lower-quality demonstrations used in data collection), T2MIR exhibits superior performance over baseline methods across different environments, validating its capability to handle varying dataset qualities.
As shown in Figure~\ref{fig:robust} and Table~\ref{tab:robust}, both T2MIR-AD and T2MIR-DPT exhibit superior performance over baselines, validating their robustness when learning from varying qualities of datasets.
Notably, most baselines suffer a significant performance drop when trained on Medium datasets, while T2MIR maintains satisfactory performance despite the lower data quality.
It highlights T2MIR's appealing applicability in real-world scenarios where agents often learn from suboptimal data.

% \textbf{Analysis of the number of experts.}
\textbf{Robustness to Expert Configurations.}
We vary expert configurations in both MoE layers. 
For token-wise MoE that manages tokens of different modalities, we always activate one-third of the experts as there are three modalities and vary the total number of experts.
Results in Figure~\ref{fig:hyper-n_experts-AD}-(a) and Figure~\ref{fig:hyper-n_experts-DPT}-(a) show that a moderate configuration ($\texttt{2/6}$) yields the best performance.
% For task-wise MoE, which handles task assignments, we adopt a strategy by fixing the number of active experts at 2 while varying the total expert count, as shown in Figure~\ref{fig:hyper-n_experts-AD}(b). The performance improves with the increase number of total experts and tends to be stable once the expert count is large enough.
For task-wise MoE that manages task assignments, we always activate two experts while varying the total number of experts, following the common configuration in popular works.
When testing the impact of expert configuration on one MoE structure, we keep the expert configuration of the other one unchanged.
Results in Figure~\ref{fig:hyper-n_experts-AD}-(b) and Figure~\ref{fig:hyper-n_experts-DPT}-(b) show that the performance slightly increases with more experts in total, and tends to saturate soon.

%%%%%%%%%%%%%%%%%%%%%%%%%%%%%%%%%%%%%%%%%%%%%%%%%%%%%%%%
\begin{figure}[t]
\begin{center}
\includegraphics[width=0.98\textwidth]{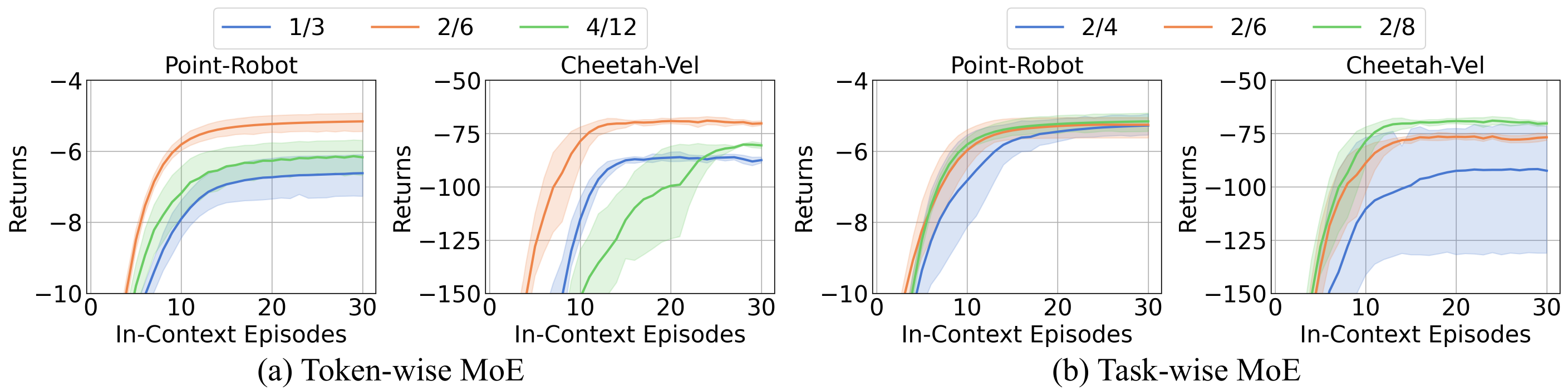}
\end{center}
\caption{
Analysis results of the number of selected experts against the total in token- and task-wise MoE on T2MIR-AD.
For example, $\texttt{2/6}$ denotes selecting the top 2 from 6 experts.
}
\label{fig:hyper-n_experts-AD}
%\vspace{-1em}
\end{figure}
%%%%%%%%%%%%%%%%%%%%%%%%%%%%%%%%%%%%%%%%%%%%%%%%%%%%%%%%

%%%%%%%%%%%%%%%%%%%%%%%%%%%%%%%%%%%%%%%%%%%%%%%%%%%%%%%%
\begin{figure}[t]
\begin{center}
\includegraphics[width=0.98\textwidth]{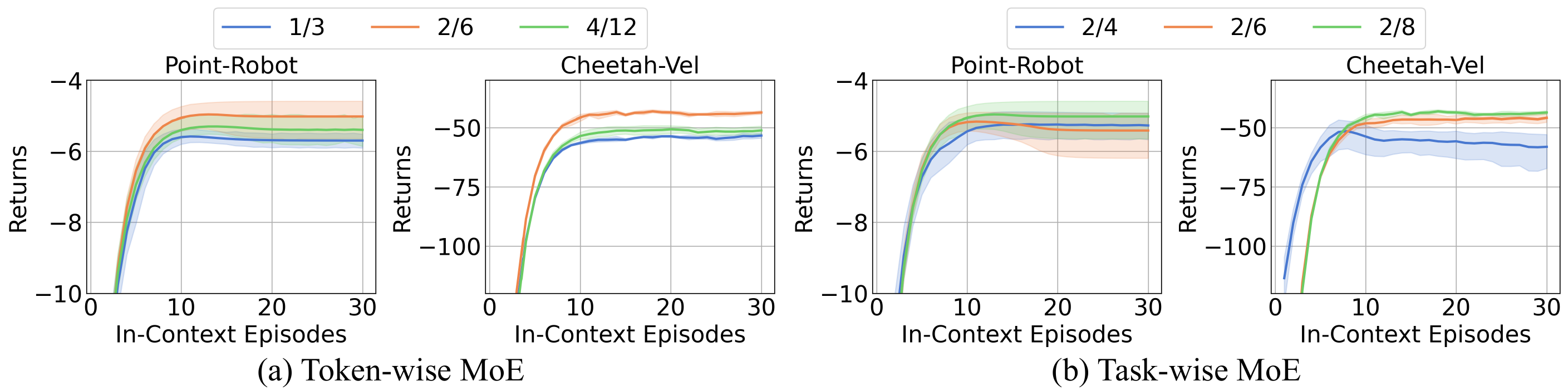}
\end{center}
\caption{
Analysis results of the number of selected experts against total experts in token- and task-wise MoE on T2MIR-DPT.
For example, $\texttt{2/6}$ denotes selecting the top 2 from 6 experts.
}
\label{fig:hyper-n_experts-DPT}
\end{figure}
%%%%%%%%%%%%%%%%%%%%%%%%%%%%%%%%%%%%%%%%%%%%%%%%%%%%%%%%

% \vspace{-0.5em}
\subsection{Visualization Insights into MoE Structure}\label{sec:exp_vis}

%We gain deep insights into the regularity of how routers assign the input tokens in token- and task-wise MoE through t-SNE visualization in this section.
% To gain empirical insights into our dual-MoE structure, we analyze the expert assignments through t-SNE visualization and examine gradient conflicts via correlation computation.
%Appendix~\ref{app:visualization} gives more visualization results.

%%%%%%%%%%%%%%%%%%%%%%%%%%%%%%%%%%%%%%%%%%%%%%%%%%%%%%%%
\begin{wrapfigure}{r}{0.3\textwidth}
\vspace{-0.5em}
\centering
\includegraphics[width=0.3\textwidth]{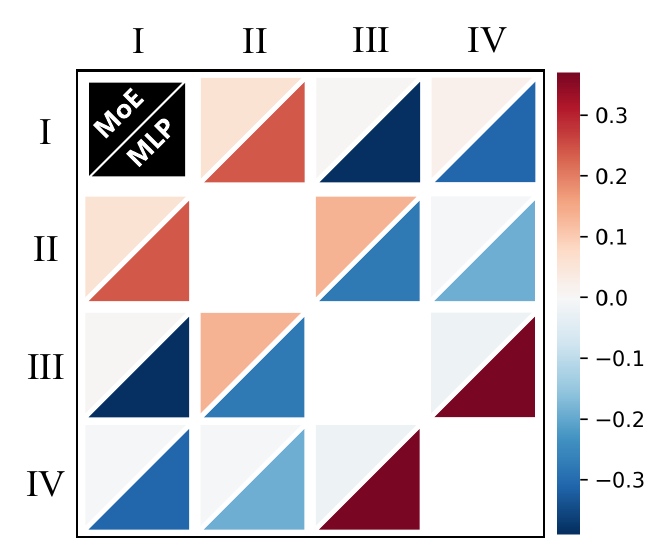}
\caption{
Cosine similarity of gradients between Point-Robot tasks in four quadrants (I-IV), comparing T2MIR-AD (MoE) with AD (MLP).
}
\label{fig:visual-conflicts}
\vspace{-1.5em}
\end{wrapfigure}
%%%%%%%%%%%%%%%%%%%%%%%%%%%%%%%%%%%%%%%%%%%%%%%%%%%%%%%%

\textbf{Modality Clustering in Token-wise MoE.}
% For token-wise MoE, we examine how it handles different input modalities. Through t-SNE visualization of Cheetah-Vel (Figure~\ref{fig:intro}, left), we map the expert assignment probabilities for all tokens, where spatial proximity indicates similarity in assignment patterns.
% The visualization reveals clear clustering behaviors: tokens within the same modality (state, action, or reward) tend to be assigned to the same experts, while tokens from different modalities are routed to distinct experts.
% This routing pattern demonstrates that token-wise MoE can effectively recognize and separate different input modalities.
In Figure~\ref{fig:intro} (left), the spatial proximity indicates similarity in expert assignments.
It exhibits a clear clustering pattern that tokens from different modalities are routed to distinct experts, highlighting the successful utilization of the MoE structure to process tokens with distinct semantics.

\textbf{Task Clustering in Task-wise MoE.}
% To analyze task routing patterns in task-wise MoE, we choose Cheetah-Vel as it offers well-structured target velocities that naturally reflect task relationships.
% Through t-SNE visualization (Figure~\ref{fig:intro}, right) of expert assignment probabilities, we observe clear task-specific clustering where the same task trajectories share similar expert assignments.
% Moreover, enabled by contrastive learning, the routing patterns reveal a smooth transition across tasks with similar velocities, forming a continuous spectrum in the routing space.
In Figure~\ref{fig:intro} (right), points of similar color come from similar tasks.
It forms a clear pattern in the router representation space, where similar tasks cluster closely and distinct tasks are widely separated.
This confirms the effective use of the MoE structure to manage a broad task distribution.
Appendix~\ref{app:visualization} gives more visualization on the two MoE layers.

\textbf{Gradient Conflict Mitigation.}
% Gradient conflicts between tasks arise in multi-task policy learning where different goals require contrary state-action pairs. To verify the conflict mitigation of task-wise MoE, we examine four Point-Robot tasks with goals in different quadrants (I-IV) and analyze their gradient cosine similarity matrix (Figure~\ref{fig:visual-conflicts}).
% The heat map splits into T2MIR-AD (MoE, upper triangle) and AD (MLP, lower triangle), where T2MIR-AD maintains near-zero correlations across tasks, particularly for opposing quadrants (I vs III), while AD shows strong negative correlations for opposing tasks and fails to differentiate adjacent ones (III and IV), demonstrating T2MIR's advantages in both conflict mitigation and task discrimination.
In Figure~\ref{fig:visual-conflicts}, AD (lower triangle) shows significant gradient conflict between opposing tasks (e.g., I vs. III, deep blue with a large negative cosine value) and fails to discriminate tasks (e.g., III vs. IV, deep red with a large positive cosine value).
In contrast, T2MIR-AD (upper triangle) maintains nearly orthogonal gradients across diverse tasks, especially for opposing ones (e.g., I vs. III or II vs. IV, cosine value near 0).
This comparison verifies T2MIR's advantage in both gradient conflict mitigation and task discrimination. 
% More visualization insights can be found in Appendix \ref{app:visualization}.

\vspace{-0.5em}
\section{Conclusions, Limitations, and Future Work}\label{conclusion}
\vspace{-0.5em}
% In the paper, we introduce MoE to ICRL to tackle the challenges of the intrinsic multi-modality in sequence input tokens and the diverse, heterogeneous nature of decision tasks.
In the paper, we introduce architectural advances of MoE to address input multi-modality and task diversity for ICRL.
% We implement a simple and scalable architecture T2MIR and makes ICRL one step closer toward the achievements in language and vision communities.
We propose a scalable framework, where a token-wise MoE processes distinct semantic inputs and a task-wise MoE handles a broad task distribution with contrastive routing.
% Improvements in generalization capacities highlight the potential impact of T2MIR.
% Our analysis further confirm the effect of token- and task-wise MoE in processing multi-modality tokens and managing task assignments.
% Without updating parameters and providing warm-start prompts, our method exhibits superior generalization to unseen tasks.
Improvements in generalization capacities highlight the potential impact of our method, and deep insights via visualization validate that our MoE design effectively addresses the associated challenges.
% Though, our method is trained on lightweight datasets, which limits the capacity of MoE in processing huge number of datas and extending the size of model parameters.

Though, our method is evaluated on widely adopted benchmarks in ICRL, with relatively lightweight datasets compared to popular large models.
% A promising improvement is to evaluate more complex environments such as XLand-MiniGrid~\citep{nikulin2024XLand, nikulin2025xland} with huge datas, unlocking the scaling law with the transformer and MoE architecture and MoE.
An urgent improvement is to evaluate on more complex environments such as XLand-MiniGrid~\citep{nikulin2024XLand, nikulin2025xland} with huge datasets, unlocking the scaling properties of MoE in ICRL domains.
% Another question worth exploring is that whether the contrastive loss is effective for managing task assignments facing massive number of tasks.
Another step is to deploy our method to vision-language-action (VLA) tasks~\citep{kim2024openvla,intelligence2025pi_} that naturally involve more complex input multi-modality and task diversity.

\clearpage
\section*{Acknowledgements}
This work was supported in part by the National Natural Science Foundation of China (Nos. 62376122 and 72394363), and in part by the AI \& AI for Science Project of Nanjing University.

% ==================== References ====================
\bibliographystyle{unsrt}

% ========================================================

\clearpage
\appendix

\part*{Appendix}
\addcontentsline{toc}{part}{Appendix}

\startcontents[appendix]
\printcontents[appendix]{}{1}{}

\newpage
\section{Algorithm Pseudocodes}\label{app:alg}
Based on the implementations in Sec.~\ref{method}, this section gives the brief procedures of T2MIR.
Algorithm~\ref{alg:t2mir-ad} and Algorithm~\ref{alg:t2mir-dpt} show the pipline of training T2MIR-AD and T2MIR-DPT, respectively.
We train all components including token-wise MoE, task-wise MoE and their routers together with the main causal transformer network end to end.
Then, Algorithm~\ref{alg:t2mir-ad-test} and Algorithm~\ref{alg:t2mir-dpt-test} show the evaluation phase, where the agent can improve its performance on test tasks by interacting with the environments without any parameter updates.

% ==================================================
\begin{algorithm}[!h]
\caption{Model Training of T2MIR-AD}
\label{alg:t2mir-ad}
\KwIn{Training tasks $\mathcal{T}_\text{train}$ and corresponding offline datasets $\mathcal{D}_\text{train}$ \newline
Causal transformer $\pi_\theta$;~~~~~~Trajectory number $L$;~~~~~~Prompt length $T$ \newline
Batch size $B$;~~~~~~~~~~~~~~~~~~~~~Contrastive loss weight $w_\text{con}$
}
\For{each iteration}{
    Initialize query batch $\mathcal{B} = \{\}$, key batch $\mathcal{B}' = \{\}$ \\
    \For{$b=1,2,...,B$}{
        Sample a task $M_i\sim \mathcal{T}_{\text{train}}$ and obtain the corresponding dataset $\mathcal{D}_i$ from $\mathcal{D}_{\text{train}}$ \\
        Sample $L$ trajectories from $\mathcal{D}_i$ \\
        Sort $L$ trajectories by return and concatenate to $\tau_i = \{s_0, a_0, r_0, \ldots, s_T, a_T, r_T\}$ \\
        Sample another $L$ trajectories from $\mathcal{D}_i$ to obtain $\tau'_i$ as a positive key \\
        Add $\tau_i$ and $\tau'_i$ to $\mathcal{B}$ and $\mathcal{B}'$ respectively \\
    }
    Get a query batch $\mathcal{B}=\{\tau\}_{b=1}^B$ and key batch $\mathcal{B}'=\{\tau'\}_{b=1}^B$ \\
    Autoregressively predict the actions with $\pi_\theta$ and compute loss in Eq.~(\ref{eq:ad_loss}) using $\mathcal{B}$ \\
    Compute balance loss $\mathcal{L}_\text{balance}$ in Eq.~(\ref{eq:l_balance}) \\
    Compute contrastive loss $\mathcal{L}_\text{contrastive}$ with key batch $\mathcal{B}'$ in Eq.~(\ref{nce_loss2}) \\
    Update $\pi_\theta$ by minimize loss $\mathcal{L} = \mathcal{L}(\theta) + \mathcal{L}_\text{balance} + w_\text{con}\cdot\mathcal{L}_\text{contrastive}$ \\
    Update momentum router in task-wise MoE in Eq.~(\ref{soft_update}) \\
}
\end{algorithm}
% ==================================================

% ==================================================
\begin{algorithm}[!h]
\caption{Model Evaluation of T2MIR-AD}
\label{alg:t2mir-ad-test}
\KwIn{Test tasks $\mathcal{T}_\text{test}$;~~~~~~~Causal transformer $\pi_\theta$;~~~~~~~Evaluation episodes $N$ \newline
History buffer $\mathcal{R}$;~~~History episode length $K$
}
\For{each task $M_i\in\mathcal{T}_\text{test}$}{
    Initialize history buffer $\mathcal{R} = \{\}$ \\
    \For{$n = 1, \cdots, N$}{
        Reset env. and get feedback $s_0$ \\
        \For{each timestep $t$}{
            Predict action $a_t = \pi_\theta(s_t, \mathcal{R})$ \\
            Step env. and get feedback $s_{t+1}$ and $r_t$ \\
            Add $s_t, a_t, r_t$ to buffer $\mathcal{R}$ \\
        }
        Sort $\mathcal{R}$ by episode return \\
        \If{$n > K$}{
            Drop the first episode in $\mathcal{R}$ \\
        }
    }
}
\end{algorithm}
% ==================================================

% ==================================================
\begin{algorithm}[!h]
\caption{Model Training of T2MIR-DPT}
\label{alg:t2mir-dpt}
\KwIn{Training tasks $\mathcal{T}_\text{train}$ and corresponding offline datasets $\mathcal{D}_\text{train}$ \newline
Causal transformer $\pi_\theta$;~~~~~Prompt length $T$;~~~~~Expert policy $\pi^*$ \newline
Batch size $B$;~~~~~~~~~~~~~~~~~~~~Contrastive loss weight $w_\text{con}$
}
\For{each task $M_i\in\mathcal{T}_\text{train}$}{
    Obtain dataset $\mathcal{D}_i$ from $\mathcal{D}_\text{train}$ \\
    Get optimal action for states in $\mathcal{D}_i$ using expert policy $\pi_i^*$ \\
    Obtain query dataset $\mathcal{D}_i^q$ \\
}
Obtain query datasets $\mathcal{D}_\text{train}^q$ \\
\For{each iteration}{
    Initialize query batch $\mathcal{B} = \{\}$, key batch $\mathcal{B}' = \{\}$ \\
    \For{$b=1,2,...,B$}{
        Sample a task $M_i\sim \mathcal{T}_{\text{train}}$ \\
        Obtain the corresponding dataset $\mathcal{D}_i$ from $\mathcal{D}_\text{train}$ and query dataset $\mathcal{D}_i^q$ from $\mathcal{D}_\text{train}^q$ \\
        Sample state-action pairs $(s, a^*)$ and $(s', {a^*}')$ from $\mathcal{D}_i^q$ \\
        Sample a trajectory from $\mathcal{D}_i$ and obtain $\tau_i = \{s_0, a_0, r_0, \ldots, s_T, a_T, r_T\}$ \\
        Sample another trajectory from $\mathcal{D}_i$ to obtain $\tau'_i$ as a positive key \\
        Add $\big((s, a^*), \tau_i\big)$ and $\big((s', {a^*}'), \tau'_i\big)$ to $\mathcal{B}$ and $\mathcal{B}'$ respectively \\
    }
    Get a query batch $\mathcal{B}=\big\{\big((s, a^*), \tau\big)\big\}_{b=1}^B$ and key batch $\mathcal{B}'=\big\{\big((s', {a^*}'), \tau'\big)\big\}_{b=1}^B$ \\
    Autoregressively predict the optimal action with $\pi_\theta$ and compute loss in Eq.~(\ref{eq:dpt_loss}) using $\mathcal{B}$ \\
    Compute balance loss $\mathcal{L}_\text{balance}$ in Eq.~(\ref{eq:l_balance}) \\
    Compute contrastive loss $\mathcal{L}_\text{contrastive}$ with key batch $\mathcal{B}'$ in Eq.~(\ref{nce_loss2}) \\
    Update $\pi_\theta$ by minimize loss $\mathcal{L} = \mathcal{L}(\theta) + \mathcal{L}_\text{balance} + w_\text{con}\cdot\mathcal{L}_\text{contrastive}$ \\
    Update momentum router in task-wise MoE in Eq.~(\ref{soft_update}) \\
}
\end{algorithm}
% ==================================================

% ==================================================
\begin{algorithm}[!h]
\caption{Model Evaluation of T2MIR-DPT}
\label{alg:t2mir-dpt-test}
\KwIn{Test tasks $\mathcal{T}_\text{test}$;~~~~~~~Causal transformer $\pi_\theta$;~~~~~~~Evaluation episodes $N$ \newline
History buffer $\mathcal{R}$;~~~Episode history buffer $\tau$
}
\For{each task $M_i\in\mathcal{T}_\text{test}$}{
    Initialize episode history buffer $\tau = \{\}$ \\
    \For{$n = 1, \cdots, N$}{
        Initialize history buffer $\mathcal{R} = \{\}$ \\
        Reset env. and get feedback $s_0$ \\
        \For{each timestep $t$}{
            Predict action $a_t = \pi_\theta(s_t, \tau)$ \\
            Step env. and get feedback $s_{t+1}$ and $r_t$ \\
            Add $s_t, a_t, r_t$ to buffer $\mathcal{R}$ \\
        }
        $\tau \leftarrow \mathcal{R}$
    }
}
\end{algorithm}
% ==================================================

\clearpage
\section{Regularization Loss for Balancing Expert Utilization}\label{app:balance_loss}
% In this section, we give the details about the regularization loss mentioned in~\ref{method:token_moe}.
% To avoid the router to converge on producing large weights for the same few experts, we employ importance loss and load-balancing loss proposed in~\citep{shazeer2017outrageously} as a regularization term.
% The importance loss is used to encourage all experts to have equal importance, which is equal to the square of the coefficient of variation of the set of importance values $\text{Imp}(h) = \sum_h G_\text{tok}(\cdot|h)$.

% However, experts may still receive very different number of input tokens as only top-$k$ experts will be activated, which will cause imbalanced resource utilization in distributed computing.
% For example, one expert may be assigned large weights for only a few tokens and thus gets activated for those, while another expert may be assigned small weights across many tokens but fails to get activated.
% To solve this problem, a load-balancing loss was proposed to ensure experts receive roughly equal numbers of input tokens.

In this section, we provide detailed information about the regularization loss mentioned in Sec.~\ref{method:token_moe}. To prevent the gating network from converging to a state where it consistently assigns large weights to the same few experts, we adopt both importance loss and load-balancing loss as proposed by Shazeer et al. \citep{shazeer2017outrageously}.

The importance loss aims to encourage all experts to have equal significance within the model, and the importance of an expert relative to a batch of training examples is defined as the sum of the gate values assigned to that expert:
\begin{equation}
    \text{Imp}(h) = \sum_{x \in X} G_{\text{tok}}(x|h),
\end{equation}
where $G_{\text{tok}}(x|h)$ represents the gate value for expert $h$ given input $x$. An additional loss term $\mathcal{L}_{\text{importance}}$ is then added to the overall loss function, which is calculated as the square of the coefficient of variation of the set of importance values multiplied by a hand-tuned scaling factor $w_{\text{imp}}$:
\begin{equation}
    \mathcal{L}_{\text{importance}} = w_{\text{imp}} \cdot CV(\text{Imp}(h))^2.
\end{equation}

Despite ensuring equal importance among experts, discrepancies may still arise in the number of input tokens each expert receives due to the top-$k$ activation mechanism.
One expert might be assigned large weights for only a few tokens while another expert could receive small weights across many tokens but fail to activate.
To address this issue, a load-balancing loss is introduced to ensure that each expert handles approximately the same number of input tokens. 

To achieve this, a smooth estimator $\text{Load}(h)$ of the number of examples assigned to each expert.
This estimator allows gradients to propagate through backpropagation, by utilizing the noise term in the gating function, which we omit in Eq.~(\ref{eq:token_moe}).
Given a trainable weight of noise matrix $W_{noise}$, Eq.~(\ref{eq:token_moe}) is rewrite as
\begin{equation}
\begin{aligned}
    & w_{\text{tok}}(i; h) = \text{softmax}(H(i|h))[i], \\
    H(i|h) = \text{topk}&\big(G_{\text{tok}}(i|h) + \text{StandardNormal}()\cdot\text{Softplus}(h\cdot W_{noise})_i\big). \\
\end{aligned}
\end{equation}
Let $P(i; h)$ denote the probability that $w_{\text{tok}}(i; h)$ is non-zero, given a new random noise choice for element $i$, while keeping the already sampled noises for other elements constant.
Note that, $w_{\text{tok}}(i; h)$ is non-zero if and only if $H(i|h)$ is greater than the $k^{th}$-greatest element of $H(\cdot|h)$ excluding itself.
Thus, we have
\begin{equation}
    P(i; h) = Pr\big(G_{\text{tok}}(i|h) + \text{StandardNormal}()\cdot\text{Softplus}(h\cdot W_{noise})_i > \text{kth\_excluding}(H(\cdot|h), k, i)\big),
\end{equation}
where $\text{kth\_excluding}(H(\cdot|h), k, i)$ means the $k^{th}$-greatest element of $H(\cdot|h)$, excluding $i$-th element.
Based on this, we can compute:
\begin{equation}
    P(i; h) = \Phi\left(\frac{G_\text{tok}(i|h) - \text{kth\_excluding}(H(i|h), k, i)}{\text{Softplus}((h \cdot W_{noise})_i)}\right),
\end{equation}
where $\Phi$ is the cumulative distribution function (CDF) of the standard normal distribution.
Then, the estimated load for expert $i$ is given by:
\begin{equation}
    \text{Load}(i; h) = \sum P(i; h).
\end{equation}
Finally, the load-balancing loss is defined as the square of the coefficient of variation of the load vector, scaled by a hand-tuned parameter $w_{\text{load}}$:
\begin{equation}
    \mathcal{L}_{\text{load}} = w_{\text{load}} \cdot CV(\text{Load}(h))^2.
\end{equation}

Combining both losses yields the final regularization loss showed in Eq.~(\ref{eq:l_balance}) used in our model which helps maintain balanced utilization of experts during training, preventing any single expert from dominating the computation.

\section{Contrastive Learning for Task-wise MoE Router}\label{app:theorem}
In this section, we give the proof of Theorem~\ref{theorem:contrastive} based on a lemma as follows.
\begin{lemma}\label{lemma:lemma}
    Give a task from the task distribution $M\sim P(M)$, let $\bar{h} = f(\tau)$ as the average of hidden state after self-attention calculation of task $M$, $z\sim G_{task}(z|\bar{h})$.
    Then, we have
    \begin{equation}
        \frac{p(M|z)}{p(M)} = \mathbb{E}_{\bar{h}}\left[\frac{p(z|\bar{h})}{p(z)}\right].
    \end{equation}
\end{lemma}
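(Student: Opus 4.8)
The plan is to reduce the claimed identity to Bayes' rule together with the Markov structure $M \to \bar{h} \to z$ that the generative pipeline imposes. The data are produced by first drawing a task $M \sim P(M)$, collecting a trajectory $\tau$ from $M$, deterministically forming $\bar{h} = f(\tau)$, and finally sampling $z \sim G_{\text{task}}(\cdot \mid \bar{h})$. The key observation is that the router acts on $\bar{h}$ alone, so $z$ is conditionally independent of $M$ given $\bar{h}$; formally $p(z \mid \bar{h}, M) = p(z \mid \bar{h})$. I would state this conditional independence first, reading it directly off the sampling scheme.

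First I would rewrite the left-hand side using Bayes' rule. Since $p(M \mid z) = p(z \mid M)\,p(M)/p(z)$, the task prior cancels and
\[
\frac{p(M\mid z)}{p(M)} = \frac{p(z\mid M)}{p(z)},
\]
so it suffices to show that $p(z \mid M)/p(z)$ equals the stated expectation. Next I would expand $p(z \mid M)$ by marginalizing over the intermediate variable $\bar{h}$ and invoke the conditional independence to remove the explicit dependence on $M$ inside the integrand,
\[
p(z \mid M) = \int p(z \mid \bar{h}, M)\, p(\bar{h} \mid M)\, d\bar{h} = \int p(z \mid \bar{h})\, p(\bar{h}\mid M)\, d\bar{h}.
\]
Dividing through by $p(z)$, which is constant with respect to the integration over $\bar{h}$ (it is the full marginal over all tasks and hidden states), I can move the factor inside the integral,
\[
\frac{p(z\mid M)}{p(z)} = \int \frac{p(z\mid \bar{h})}{p(z)}\, p(\bar{h}\mid M)\, d\bar{h} = \mathbb{E}_{\bar{h}}\!\left[\frac{p(z\mid\bar{h})}{p(z)}\right],
\]
where $\mathbb{E}_{\bar{h}}$ is understood as the expectation over $\bar{h} \sim p(\bar{h}\mid M)$ induced by trajectories of the fixed task $M$. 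Chaining the two displays yields the claim.

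The only delicate point, and the one I would be careful to spell out, is the conditional independence $p(z\mid\bar{h},M)=p(z\mid\bar{h})$; everything else is routine manipulation of densities. This step is where the modelling assumption that the router is a function of $\bar{h}$ alone does all the work, and it also fixes the correct reading of $\mathbb{E}_{\bar{h}}$ as a conditional expectation given $M$ rather than a marginal one. I would accordingly flag explicitly that the expectation is taken over $\bar{h}\mid M$, so that the identity holds for the fixed task $M$ appearing on the left-hand side.
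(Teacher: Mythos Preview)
Your proposal is correct and follows essentially the same route as the paper's proof: apply Bayes' rule to rewrite $p(M\mid z)/p(M)$ as $p(z\mid M)/p(z)$, marginalize $p(z\mid M)$ over $\bar{h}$, and recognize the resulting integral as the conditional expectation $\mathbb{E}_{\bar{h}\sim p(\bar{h}\mid M)}[p(z\mid\bar{h})/p(z)]$. Your write-up is in fact more careful than the paper's, since you make explicit both the Markov assumption $p(z\mid\bar{h},M)=p(z\mid\bar{h})$ and the conditioning of the expectation on $M$, which the paper leaves implicit.
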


\begin{proof}
\begin{equation}
    \begin{aligned}
        \frac{p(M|z)}{p(M)} &= \frac{p(z|M)}{p(z)} \\
        &= \int_{\bar{h}} \frac{p(z|\bar{h})p(\bar{h}|M)}{p(z)}~\text{d}\bar{h} \\
        &=\mathbb{E}_{\bar{h}}\left[\frac{p(z|\bar{h})}{p(z)}\right].
    \end{aligned}
\end{equation}
The proof is completed.
\end{proof}

\setcounter{theorem}{0}
\begin{theorem}
    Let $\mathcal{M}$ denote a set of tasks following the task distribution $P(M)$, and $|\mathcal{M}|\!=\!N$.
    $M\!\in\! \mathcal{M}$ is a given task.
    Let $\bar{h}\!=\!f(\tau)$, $z\!\sim\! G_{\text{task}}(\cdot|\bar{h})$, and $e(\bar{h},z)\!=\!\frac{p(z|\bar{h})}{p(z)}$, where $\tau$ is a trajectory from task $M$ and $\bar{h}$ is average hidden state of all tokens after the self-attention calculation function $f(\cdot)$.
    Let $\bar{h}'$ denote the average hidden state generated by any task $M'\!\in\!\mathcal{M}$, then we have
    \begin{equation}\label{mi_bound_app}
        I(z; M) \ge \log N +  \mathbb{E}_{\mathcal{M},z,\bar{h}}\left[\log\frac{e(\bar{h},z)}{\sum_{M'\in\mathcal{M}}e(\bar{h}',z)}\right].
    \end{equation}
\end{theorem}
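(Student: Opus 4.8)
The plan is to follow the standard InfoNCE derivation, adapted to the task-level setting, and to use Lemma~\ref{lemma:lemma} as the bridge between the mutual information $I(z;M)$ and the ratio $e(\bar h, z) = p(z\mid\bar h)/p(z)$. Starting from the definition $I(z;M) = \mathbb{E}_{z,M}\!\left[\log \frac{p(M\mid z)}{p(M)}\right]$ given in Eq.~(\ref{eq:mi}), the first step is to reinterpret this expectation as one over the full set $\mathcal{M}$ of $N$ i.i.d. tasks together with the latent $z$ drawn from the ``correct'' task $M$ and its hidden state $\bar h$. Concretely, I would think of the quantity inside the expectation as the log-probability that a classifier correctly identifies which of the $N$ tasks in $\mathcal{M}$ generated $z$: by Bayes' rule with a uniform prior over which index is the positive one, this probability is $\frac{p(z\mid M)/p(z)}{\sum_{M'\in\mathcal{M}} p(z\mid M')/p(z)}$.

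Second, I would invoke Lemma~\ref{lemma:lemma} to rewrite each density ratio $p(z\mid M')/p(z) = p(M'\mid z)/p(M')$ as $\mathbb{E}_{\bar h'}[e(\bar h', z)]$, i.e. an expectation of $e$ over the hidden states generated by task $M'$. The key inequality then comes from pushing these inner expectations outside the logarithm: since $-\log(\cdot)$ is convex (equivalently $\log$ is concave), and the $\log$ of the classification probability is a decreasing-then-composed function of the denominator sum, Jensen's inequality lets me replace $\mathbb{E}_{\bar h'}[e(\bar h', z)]$ inside the denominator sum by a sample $e(\bar h', z)$ at the cost of a $\ge$ direction. This yields
\begin{equation*}
I(z;M) \ge \mathbb{E}_{\mathcal{M}, z, \bar h}\left[\log \frac{e(\bar h, z)}{\sum_{M'\in\mathcal{M}} e(\bar h', z)}\right].
\end{equation*}
Finally, adding and subtracting $\log N$ and noting that the ``self'' term $e(\bar h, z)$ appears in the denominator sum, I would massage the bound into the stated form $I(z;M) \ge \log N + \mathbb{E}[\log \frac{e(\bar h, z)}{\sum_{M'} e(\bar h', z)}]$; the $\log N$ simply records that the uniform baseline classifier achieves accuracy $1/N$, so the expectation term is non-positive and measures how much better than chance the learned $e$ does.

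The main obstacle, and the step that needs the most care, is making the Jensen's-inequality application rigorous in the right direction. The subtlety is that the correct (positive) term $e(\bar h, z)$ appears both in the numerator and as one summand of the denominator, so I cannot naively apply Jensen to the whole denominator at once — I must either (i) handle the positive term exactly and apply Jensen only to the $N-1$ negative terms, or (ii) set up the expectation over $\mathcal{M}$ so that all $N$ terms are treated symmetrically as i.i.d. draws and Jensen applies to the sum as a whole. Approach (ii) is cleaner: treat $\mathcal{M} = \{M_1,\dots,M_N\}$ as i.i.d. from $P(M)$, designate (say) $M_1$ as the one generating $(z,\bar h)$, and observe that $\mathbb{E}_{\bar h_j}[e(\bar h_j, z)]$ is exactly the density ratio needed for index $j$; then the function $x \mapsto \log\frac{e(\bar h, z)}{x}$ is concave in $x$, so $\mathbb{E}[\log\frac{e(\bar h,z)}{\sum_j e(\bar h_j,z)}] \le \log\frac{e(\bar h,z)}{\sum_j \mathbb{E}[e(\bar h_j,z)]}$ and the chain of equalities from Bayes' rule plus Lemma~\ref{lemma:lemma} closes the argument. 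I would also need to be mildly careful that expectations and integrals interchange (Fubini/Tonelli, all integrands non-negative), but that is routine given the densities involved.
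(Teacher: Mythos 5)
There is a genuine gap at exactly the step you flag as the crux. The map $x\mapsto\log\frac{e(\bar h,z)}{x}=\log e(\bar h,z)-\log x$ is \emph{convex} in $x$ (since $-\log$ is convex), not concave, so Jensen runs in the opposite direction to the one you need: conditioning on $(\bar h,z)$ and averaging over the negatives' hidden states gives $\mathbb{E}_{\{\bar h_j\}}\bigl[\log\frac{e(\bar h,z)}{\sum_j e(\bar h_j,z)}\bigr]\ \ge\ \log\frac{e(\bar h,z)}{\sum_j\mathbb{E}[e(\bar h_j,z)]}$, whereas your chain requires ``$\le$''. Because the sampled ratios sit in the denominator of a quantity you must \emph{upper}-bound, replacing them by their conditional expectations $p(z|M')/p(z)$ (via Lemma~\ref{lemma:lemma}) cannot be justified by convexity, neither in your variant (i) nor in the ``cleaner'' symmetric variant (ii); the proposed inequality is false in general. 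Jensen in the valid (concave-log) direction is available only for the \emph{positive} term, $\mathbb{E}_{\bar h}\bigl[\log\frac{p(z|\bar h)}{p(z)}\bigr]\le\log\mathbb{E}_{\bar h}\bigl[\frac{p(z|\bar h)}{p(z)}\bigr]=\log\frac{p(M|z)}{p(M)}$, which is precisely the only place the paper uses it.

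For comparison, the paper's proof is the same InfoNCE derivation run from the right-hand side of Eq.~(\ref{mi_bound_app}) upward, but it does not attempt any Jensen step on the negatives: it replaces $\sum_{M'\in\mathcal{M}\backslash M}p(z|\bar h')/p(z)$ by its mean $N-1$ as an explicit approximation (the step marked ``$\approx$'', inherited from the original InfoNCE derivation of Oord et al.), then uses the elementary bound $\frac{1}{1+\frac{p(z)}{p(z|\bar h)}(N-1)}\le\frac{1}{\frac{p(z)}{p(z|\bar h)}N}$ (itself valid only when $p(z|\bar h)\ge p(z)$), and only then applies Jensen to the positive term and Lemma~\ref{lemma:lemma} to arrive at $I(z;M)-\log N$. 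So the repair of your plan is not a tighter symmetric Jensen argument---none exists in that direction---but either adopting the paper's approximation step, or invoking the rigorous multi-sample InfoNCE bound; note, however, that the rigorous bound naturally controls $I(z;\bar h)$ (the negatives are draws of $\bar h'$), so obtaining the stated inequality with $I(z;M)$ still requires the paper's expectation-replacement step rather than a pure inequality chain.
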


\begin{proof}
Using Lemma~\ref{lemma:lemma} and Jensen's inequality, we have
\begin{equation}
    \begin{aligned}
        \mathbb{E}_{\mathcal{M},z,\bar{h}}\left[\log \frac{e(\bar{h}, z)}{\sum_{M'\in\mathcal{M}}e(\bar{h}', z)}\right] &= \mathbb{E}_{\mathcal{M},z,\bar{h}}\left[\log \frac{\frac{p(z|\bar{h})}{p(z)}}{\frac{p(z|\bar{h})}{p(z)} + \sum_{M'\in\mathcal{M}\backslash M}\frac{p(z|\bar{h}')}{p(z)}}\right] \\
        &= \mathbb{E}_{\mathcal{M},z,\bar{h}}\left[-\log \left(1 + \frac{p(z)}{p(z|\bar{h})} \sum_{M'\in\mathcal{M}\backslash M} \frac{p(z|\bar{h}')}{p(z)} \right)\right] \\
        &\approx \mathbb{E}_{\mathcal{M},z,\bar{h}}\left[-\log \left(1 + \frac{p(z)}{p(z|\bar{h})}(N-1) \mathbb{E}_{M'\in\mathcal{M}\backslash M}\left[\frac{p(z|\bar{h}')}{p(z)}\right]\right)\right] \\
        &= \mathbb{E}_{\mathcal{M},z,\bar{h}}\left[-\log \left(1 + \frac{p(z)}{p(z|\bar{h})}(N-1)\right)\right] \\
        &= \mathbb{E}_{\mathcal{M},z,\bar{h}}\left[\log \left(\frac{1}{1 + \frac{p(z)}{p(z|\bar{h})} (N-1)}\right)\right] \\
        &\le \mathbb{E}_{\mathcal{M},z,\bar{h}}\left[\log \left(\frac{1}{\frac{p(z)}{p(z|\bar{h})} N}\right)\right] \\
        &\le \mathbb{E}_{\mathcal{M},z}\left[\log \mathbb{E}_{\bar{h}}{\left[\frac{p(z|\bar{h})}{p(z)}\right]}\right] - \log N \\
        &= \mathbb{E}_{\mathcal{M},z}\left[\log \frac{p(M|z)}{p(M)}\right] - \log N \\
        &= I(z;M) - \log N.
    \end{aligned}
\end{equation}
Thus, we complete the proof.
\end{proof}

\section{The Details of Environments and Dataset Construction}\label{app:environment}
In this section, we show details of the evaluation environments over a variety of benchmarks, as well as the collection of offline datasets on these environments.

\subsection{The Details of Environments}
We evaluate T2MIR and all the baselines on classical benchmarks including discrete environment commonly used in ICRL~\citep{lee2023supervised}, the 2D navigation, the multi-task MuJoCo control~\citep{todorov2012mujoco} and the Meta-World~\citep{yu2020meta}.
More specifically, we evaluate all tested methods on the following environments as

\begin{itemize}[itemsep=0em, leftmargin=1.5em]
\vspace{-0.5em}
\item \textbf{DarkRoom}:
a discrete environment where the agent navigates in a $10\times 10$ grid to find the goal.
The observation is the 2D coordinate of the agent, and the actions are left, right, up, down, and stay.
The agent is started from $[0, 0]$ to find the goal which is uniformly sampled from the grid.
The reward is sparse and only $r = 1$ when the agent is at the goal, and $r = 0$ otherwise.
It provides 100 tasks, from which we randomly sample 80 tasks as training tasks and hold out the remaining 20 for evaluation.
The maximal episode step is set to 100.

\item \textbf{Point-Robot}:
a problem of a point agent navigating to a given goal position in the 2D space.
The observation is the 2D coordinate of the robot.
The action space is $[-0.1, 0.1]^2$ with each dimension corresponding to the moving distance in the horizontal and vertical directions.
The reward function is defined as the negative distance between the point agent and the goal location.
% Each learning episode always starts from the fixed origin and terminates at the horizon of $20$.
Tasks differ in goal positions that are uniformly distributed in a unit square, resulting in the variation of the reward functions.
We randomly sample 45 goals as training tasks, and another 5 goals for evaluation.
The start position is sampled uniformly from $[-0.1, 0.1]^2$ for each learning episode and the maximal episode step is set to 20.

\item \textbf{Cheetah-Vel}:
a multi-task MuJoCo continuous control environment in which the reward function differs across tasks.
It requires a planar cheetah robot to run at a particular velocity in the positive $x$-direction.
The observation space is $\mathbb{R}^{20}$, which comprises the position and velocity of the cheetah, the angle and angular velocity of each joint.
The action space is $[-1, 1]^6$ with each dimension corresponding to the torque of each joint.
The reward function is negatively correlated with the absolute value between the current velocity of the agent and the goal, plus the control cost.
The goal velocities are uniformly sampled from the distribution $U[0.075, 3.0]$.
We randomly sample 45 goals as training tasks, and another 5 goals for evaluation.
The maximal episode step is set to 200.

\item \textbf{Ant-Dir}:
a multi-task MuJoCo continuous control environment in which the reward function differs across tasks.
It requires a quadruped ant robot to run in a target direction.
The observation space is $\mathbb{R}^{27}$, which comprises the position and velocity of the ant robot, together with the angle and angular velocity of 8 joints.
The action space is $[-1, 1]^8$ with each dimension corresponding to the torque of each joint.
The reward function is positively correlated with the velocity of the ant robot in the target direction, plus the control cost.
The target directions are uniformly sampled from $U[0, 2\pi]$.
We randomly sample 45 target directions as training tasks, and another 5 for evaluation.
The maximal episode step is set to 200.

\item \textbf{Walker-Param}: a multi-task MuJoCo benchmark where tasks differ in state transition dynamics.
It need to control a two-legged walker robot to run as fast as possible with varying environment dynamics.
The observation space is $\mathbb{R}^{17}$ and the action space is $[-1, 1]^6$.
The reward function is proportional to the running velocity in the positive $x$-direction, which remains consistent for different tasks.
The physical parameters of body mass, inertia, damping, and friction are randomized across tasks.
% The agent needs to move forward with varying environment dynamics to accomplish the task.
We randomly sample 45 physical parameters as training tasks, and another 5 for evaluation.
The maximal episode step is set to 200.

\item \textbf{Reach}, \textbf{Push}: two typical environments from the robotic manipulation benchmark Meta-World.
Reach and Push control a robotic arm to reach a goal location in 3D space and to push the puck to a goal, respectively.
The observation space is $\mathbb{R}^{39}$, which contains current state, previous state, and the goal.
We only use the current state vector, thus the observation space is modified to $\mathbb{R}^{18}$ without goal information.
The action space is $[-1, 1]^4$.
Tasks differ in goal positions which are uniformly sampled, resulting in the variation of reward functions.
The initial position of object is fixed across all tasks.
We randomly sample 45 goals as training tasks, and another 5 goals for evaluation.
The maximal episode step is set to 100.

\item \textbf{ML10}:
contains 10 different robotics manipulation tasks for training and another 5 different tasks for evaluation.
Specifically, we randomly sample 5 goals for each training task and 1 goal for each evaluation task, resulting in 50 goals for training and 5 goals for evaluation.
We use the same setting for each robotic manipulation task as in Reach and Push.
The maximal episode step is set to 100.

\end{itemize}

Furthermore, we give the details of the heterogeneous version $\texttt{Cheetah-Vel}$ we used in Appendix~\ref{visual:heterogeneous}.
\begin{itemize}[itemsep=0em, leftmargin=1.5em]
\vspace{-0.5em}
\item \textbf{Cheetah-Vel-3\_Cluster}:
a heterogeneous version of $\texttt{Cheetah-Vel}$, where the goal velocities are different.
We sample the goal velocities from three Gaussian distributions: $\mathcal{N}(0.5, 0.15^2)$, $\mathcal{N}(1.5, 0.15^2)$, $\mathcal{N}(2.5, 0.15^2)$.
From each Gaussian distribution, we sample 14 tasks for training and 2 tasks for evaluation, resulting in 42 training tasks and 6 test tasks.
\end{itemize}

Note that we construct different tasks by setting different parameters for the environment, we cannot access these parameters (e.g., goal velocities in \texttt{Cheetah-Vel}) during training following common in-context RL settings.

%%%%%%%%%%%%%%%%%%%%%%%%%%%%%%%%%%%%%%%%%%%%%%%%%%%%%%%%
\begin{table*}[t]
\centering
\caption{Hyperparameters of SAC used to collect multi-task datasets.}
\renewcommand\arraystretch{1.2}
\setlength{\tabcolsep}{2mm}
\begin{tabular}{cccccccc}
\cmidrule[\heavyrulewidth]{1-8}
\multirow{2}{*}{Environments} & Training & Warmup & Save & Learning & Soft & Discount & Entropy \\
  &  steps &  steps &  frequency &  rate &  update &  factor &  ratio \\
\cmidrule[\heavyrulewidth]{1-8}
Point-Robot     & 2000      & 100   & 20    & 3e-4 & 0.005 & 0.99 & 0.2 \\
Cheetah-Vel     & 250000    & 2000  & 10000 & 3e-4 & 0.005 & 0.99 & 0.2 \\
Cheetah-Vel- & \multirow{2}{*}{400000} & \multirow{2}{*}{2000}  & \multirow{2}{*}{10000} & \multirow{2}{*}{3e-4} & \multirow{2}{*}{0.005} & \multirow{2}{*}{0.99} & \multirow{2}{*}{0.2} \\
3\_Cluster \\
Walker-Param    & 1000000 & 2000 & 10000 & 3e-4 & 0.005 & 0.99 & 0.2 \\
\cmidrule[\heavyrulewidth]{1-8}
\end{tabular}
\label{tab:sac_data}
\end{table*}
%%%%%%%%%%%%%%%%%%%%%%%%%%%%%%%%%%%%%%%%%%%%%%%%%%%%%%%%

%%%%%%%%%%%%%%%%%%%%%%%%%%%%%%%%%%%%%%%%%%%%%%%%%%%%%%%%
\begin{table*}[t]
\centering
\caption{Hyperparameters of PPO used to collect multi-task datasets.}
\renewcommand\arraystretch{1.2}
\setlength{\tabcolsep}{2mm}
\begin{tabular}{ccccccc}
\cmidrule[\heavyrulewidth]{1-7}
Environments & total\_timesteps & n\_steps & learning\_rate & batch\_size & n\_epochs & $\gamma$ \\
\midrule
Reach & 400000 & 2048 & 3e-4 & 64 & 10 & 0.99 \\
Push & 1000000 & 2048 & 3e-4 & 64 & 10 & 0.99 \\
\cmidrule[\heavyrulewidth]{1-7}
\end{tabular}
\label{tab:ppo_data}
\end{table*}
%%%%%%%%%%%%%%%%%%%%%%%%%%%%%%%%%%%%%%%%%%%%%%%%%%%%%%%%

\subsection{The Details of Datasets}
For discrete environment DarkRoom, we use the expert policy to collect datasets by progressively reducing the noise, as in~\citep{zisman2024emergence}.
For Point-Robot and MuJoCo environments, we employ the soft actor-critic (SAC)\citep{haarnoja2018soft} algorithm to train a policy independently for each task, and the detailed hyperparameters are shown in Table~\ref{tab:sac_data}.
For Meta-World environments, we use the Proximal Policy Optimization (PPO)\citep{schulman2017ppo} algorithm implementation provided by Stable Baselines 3\citep{antonin2021sb3}, and the detailed hyperparameters are shown in Table~\ref{tab:ppo_data}.
During training, we periodically save the policy checkpoints and use them to generate various qualities of offline datasets as

\begin{itemize}[itemsep=0em, leftmargin=1.5em]
\vspace{-0.5em}
\item \textbf{Mixed}: using all policy checkpoints to generate datasets.
We use each checkpoint to generate same number of episodes, e.g., using one checkpoint to generate one episode.
\item \textbf{Medium-Expert}: using the policy checkpoints whose performance is below 80\% of the final achieved level to generate datasets.
The max performance in Medium-Expert datasets is about 80\% of that in Mixed datasets.
\item \textbf{Medium}: using the policy checkpoints whose performance is below 50\% of the final achieved level to generate datasets.
The max performance in Medium datasets is about 50\% of that in Mixed datasets.
\end{itemize}

For query state-action pairs used in DPT and T2MIR-DPT, we use the expert policies that achieve 100\%, 80\% and 50\% performance to provide actions for states in Mixed, Medium-Expert and Medium datasets, respectively.
As we find the actions provided by expert policies contain too many values outside the boundary of the action space in Meta-World environments, we just use the state-action pairs in offline datasets as query datasets for Reach and Push.

\section{Baseline Methods}\label{app:baselines}
This section gives the details of the five representative baselines, including four ICRL approaches and one context-based offline meta-RL method.
These baselines are thoughtfully selected as they are representative in distinctive categories.
Furthermore, since our proposed T2MIR method belongs to the ICRL category, we incorporate more methods from this class as baselines for a comprehensive comparison.
The detailed descriptions of these baselines are as follows:

\begin{itemize}[itemsep=0.25em, leftmargin=1.25em]\vspace{-0.5em}
\item \textbf{AD}~\citep{laskin2023context},
is the first method to achieve ICRL through sequential modeling of offline historical data using an imitation loss.
By employing a causal transformer with sufficiently large across-episodic contexts to imitate gradient-based RL algorithms, AD learns improved policies for new tasks entirely in context without requiring external parameters updating.
  
\item \textbf{DPT}~\citep{lee2023supervised},
is an ICRL method that models contextual trajectories via supervised learning to predict optimal actions.
After pretraining on offline datasets, DPT provides contextual learning capabilities for new tasks, enabling online exploration and offline decision-making given query states and contextual prompts.

\item \textbf{IDT}~\citep{huang2024context},
is an ICRL method that simulates high-level trial-and-error processes.
It introduces an innovative architecture comprising three modules: Making Decisions, Decisions to Go, and Reviewing Decisions.
These modules generate high-level decisions in an autoregressive manner to guide low-level action selection.
Built upon transformer models, IDT can address complex decision-making in long-horizon tasks while reducing computational costs.

\item \textbf{DICP}~\citep{son2025distilling},
combines model-based reinforcement learning with previous ICRL algorithms like AD and IDT.
By jointly learning environment dynamics and policy improvements in context, DICP employs a transformer architecture to perform model-based planning and decision-making without updating model parameters
Furthermore, DICP generates actions by simulating potential future states and predicting long-term returns.

\item \textbf{UNICORN}~\citep{li2024towards},
is a context-based offline meta-RL algorithm grounded in information theory.
It proposes a unified information-theoretic framework, which focuses on optimizing different approximation bounds of the mutual information objective between task variables and their latent representations.
Leveraging the information bottleneck principle, it derives a novel general and unified task representation learning objective, enabling the acquisition of robust task representations.
% To validate T2MIR's superior performance in offline meta-learning task settings, we conduct cross-domain comparisons with this state-of-the-art context-based offline meta-RL algorithm.
\end{itemize}

% Note that most mainstream ICRL algorithms rely on imitation learning paradigms, which tend to learn suboptimal behaviors when the size and quality of the dataset are limited.
To ensure a fair comparison, all baselines are adjusted to an aligned setting, where test datasets are not available for policy evaluation.
We also standardize the size and quality of the offline datasets for all baselines.

In our experimental setup, AD faces difficulties in distilling effective policy improvement operators due to the scarcity of available offline historical data.
Thus, we use an enhanced version of AD that incorporates a trajectory-ranking mechanism based on cumulative rewards, inspired by AT~\citep{liu2023emergent}.

% Moreover, in the COMRL setting, UNICORN requires initial warm-up data to infer task representations. To align its evaluation protocol with that of ICRL methods, we initialize these representations using trajectories of random quality. The trajectory set used for computing task representations is then updated after each episode interaction.

\section{Network Architecture and Hyperparameters of T2MIR}\label{app:implementation}
This section gives details of the architecture and hyperparameters of T2MIR implementations as follows.

\textbf{Router Network.}
We implement the router network using a multi-layer perceptron (MLP) architecture without bias.
Specifically, the router contains two linear layers with Tanh as the activation function between them.
The first linear maps hidden state $h$ to a $n\_experts$-dim vector along with the Tanh activation, and the second linear maps the $n\_experts$-dim vector to another $n\_experts$-dim vector as the output of router.

\textbf{Architecture of T2MIR.}
We implement the T2MIR-AD based on causal transformer akin to AMAGO~\citep{grigsby2024amago} with Flash Attention~\citep{dao2022flashattention}.
For T2MIR-DPT, we build our T2MIR framework upon the transformer architecture in DPT~\citep{lee2023supervised}, flatten the trajectories into state-action-reward sequence as input.
Specifically, we employ separate embeddings for states, action, and rewards, adding a learnable positional embedding based on timesteps.
Each block employs a multi-head self-attention module followed by a feedforward network or a MoE layer with GELU activation~\citep{hendrycks2016gaussian}.
For action prediction, we employ a linear to map the output of transformer blocks to an action vector with a Tanh activation.
Table~\ref{tab:t2mir_ad-mixed} and Table~\ref{tab:t2mir_dpt-mixed} show the detailed hyperparameters used for T2MIR-AD and T2MIR-DPT using Mixed datasets, respectively.

\textbf{Compute.}
We train our models on one Nvidia RTX4080 GPU with the Intel Core i9-10900X CPU and 256G RAM.
The training process takes about 0.5-3 hours, depending on the complexity of the environments.
Compared with T2MIR-free backbones, T2MIR takes a lightly higher computational cost during the training process, but it requires less computational resources during task inference.

%%%%%%%%%%%%%%%%%%%%%%%%%%%%%%%%%%%%%%%%%%%%%%%%%%%%%%%%
\begin{table*}[t]
\centering
\caption{Hyperparameters in training process of T2MIR-AD using Mixed datasets.}
\renewcommand\arraystretch{1.2}
\setlength{\tabcolsep}{1.5mm}
\begin{tabular}{ccccccc}
\cmidrule[\heavyrulewidth]{1-7}
Hyperparameters & DarkRoom & Point-Robot & Cheetah-Vel & Walker-Param & Reach & Push \\
\midrule
training steps & 300000 & 100000 & 100000 & 100000 & 100000 & 100000 \\
learning rate & 3e-4 & 3e-4 & 3e-4 & 3e-4 & 5e-5 & 3e-4 \\
Prompt length & 400 & 80 & 800 & 800 & 400 & 400 \\
token experts $K_1$ & 6 & 6 & 6 & 6 & 6 & 6 \\
task experts $K_2$ & 12 & 8 & 8 & 8 & 8 & 8 \\
InfoNCE weight & 0.01 & 0.01 & 0.01 & 0.01 & 0.01 & 0.01 \\
MoE layer position & top & top & top & top & top & top \\
Momentum ratio $\beta$ & 0.995 & 0.995 & 0.995 & 0.995 & 0.995 & 0.995 \\
\cmidrule[\heavyrulewidth]{1-7}
\end{tabular}
\label{tab:t2mir_ad-mixed}
\end{table*}
%%%%%%%%%%%%%%%%%%%%%%%%%%%%%%%%%%%%%%%%%%%%%%%%%%%%%%%%

%%%%%%%%%%%%%%%%%%%%%%%%%%%%%%%%%%%%%%%%%%%%%%%%%%%%%%%%
\begin{table*}[t]
\centering
\caption{Hyperparameters in training process of T2MIR-DPT using Mixed datasets.}
\renewcommand\arraystretch{1.2}
\setlength{\tabcolsep}{1.5mm}
\begin{tabular}{ccccccc}
\toprule
Hyperparameters & DarkRoom & Point-Robot & Cheetah-Vel & Walker-Param & Reach & Push \\
\midrule
training steps & 300000 & 100000 & 100000 & 100000 & 100000 & 100000 \\
learning rate & 3e-4 & 3e-4 & 3e-4 & 3e-4 & 5e-5 & 3e-4 \\
Prompt length & 100 & 20 & 200 & 200 & 100 & 100 \\
token experts $K_1$ & 6 & 6 & 6 & 6 & 6 & 6 \\
task experts $K_2$ & 8 & 8 & 8 & 8 & 8 & 8 \\
InfoNCE weight & 0.001 & 0.001 & 0.01 & 0.01 & 0.01 & 0.001 \\
MoE layer position & top & top & top & top & top & top \\
Momentum ratio $\beta$ & 0.995 & 0.995 & 0.995 & 0.995 & 0.995 & 0.995 \\
\bottomrule
\end{tabular}
\label{tab:t2mir_dpt-mixed}
\end{table*}
%%%%%%%%%%%%%%%%%%%%%%%%%%%%%%%%%%%%%%%%%%%%%%%%%%%%%%%%

\section{Analysis of Hyperparameters and T2MIR Architecture}\label{app:hyper_analysis}
\subsection{Hyperparameter Analysis}
% \textbf{Number of selected against total experts in token-wise MoE.}
% We study the number of selected top-$k_1$ experts against total $K_1$ experts in token-wise MoE, which is the key hyperparameter that controls the assignment of multi-modality tokens.
% We compare three selection settings: $\texttt{1/3}$, $\texttt{2/6}$ and $\texttt{4/12}$ with same number of activated parameters.
% Figure~\ref{fig:hyper-n_experts-token} and Table~\ref{tab:hyper-n_experts-token} present the testing curves and converged performance of T2MIR with different selections of experts in token-wise MoE using Mixed datasets.
% Generally, $\texttt{2/6}$ is most stable and an increase in the number of experts does not lead to better performance in our experiments.
% \textbf{Number of selected against total experts in task-wise MoE.}
% We study the number of selected top-$k_2$ experts against total $K_2$ experts in task-wise MoE, which is a key hyperparameter that controls the assignment of different tasks.
% We compare three selection settings: $\texttt{2/4}$, $\texttt{2/6}$ and $\texttt{2/8}$ with same number of activated parameters.
% Figure~\ref{fig:hyper-n_experts-task} and Table~\ref{tab:hyper-n_experts-task} show the testing curves and converged performance of T2MIR using Mixed datasets.
% The results show an expected situation that $\texttt{2/8}$ is most suitable for our training setting which has 45 tasks, as its combination space is a much better match.

\textbf{Analysis of Expert Configurations on T2MIR-DPT.}
We also investigate different expert configurations on T2MIR-DPT, as shown in Figure~\ref{fig:hyper-n_experts-DPT}.
Table~\ref{tab:hyper-n_experts-token} and Table~\ref{tab:hyper-n_experts-task} give the numerical results of T2MIR with different expert configurations in token- and task-wise MoE, respectively.
For token-wise MoE, as same as T2MIR-AD, the results show that moderate configuration (\texttt{2/6}) outperforms both smaller (\texttt{1/3}) and larger (\texttt{4/12}) settings.
For task-wise MoE, the performance improves with the increase number of total experts, and tends to be stable.

%%%%%%%%%%%%%%%%%%%%%%%%%%%%%%%%%%%%%%%%%%%%%%%%%%%%%%%%
\begin{table*}[t]
\caption{
Numerical results of T2MIR with varying expert numbers in token-wise MoE.
Best in \textbf{bold}.
}
\centering
\small 
\setlength{\tabcolsep}{5.0pt}
\begin{tabular}{l|ccc|l|ccc}
\toprule
T2MIR-AD & 1/3 & 2/6 & 4/12 & T2MIR-DPT & 1/3 & 2/6 & 4/12 \\
\midrule
Point-Robot & $-6.6 \scalebox{1.1}{\tiny{$\pm$}} \scalebox{1.1}{\tiny{0.7}}$ & $\textbf{-5.2} \scalebox{1.1}{\tiny{$\pm$}} \scalebox{1.1}{\tiny{0.3}}$ & $-6.1 \scalebox{1.1}{\tiny{$\pm$}} \scalebox{1.1}{\tiny{0.5}}$ & Point-Robot & $-5.6 \scalebox{1.1}{\tiny{$\pm$}} \scalebox{1.1}{\tiny{0.2}}$ & $\textbf{-5.0} \scalebox{1.1}{\tiny{$\pm$}} \scalebox{1.1}{\tiny{0.5}}$ & $-5.3 \scalebox{1.1}{\tiny{$\pm$}} \scalebox{1.1}{\tiny{0.3}}$\\
Cheetah-Vel & $-86.1 \scalebox{1.1}{\tiny{$\pm$}} \scalebox{1.1}{\tiny{2.2}}$ & $\textbf{-68.9} \scalebox{1.1}{\tiny{$\pm$}} \scalebox{1.1}{\tiny{2.1}}$ & $-80.1 \scalebox{1.1}{\tiny{$\pm$}} \scalebox{1.1}{\tiny{0.5}}$ & Cheetah-Vel & $-53.3 \scalebox{1.1}{\tiny{$\pm$}} \scalebox{1.1}{\tiny{1.3}}$ & $\textbf{-43.2} \scalebox{1.1}{\tiny{$\pm$}} \scalebox{1.1}{\tiny{0.8}}$ & $-50.7 \scalebox{1.1}{\tiny{$\pm$}} \scalebox{1.1}{\tiny{1.9}}$\\
% \cmidrule[\heavyrulewidth]{1-4}
% T2MIR-DPT & 1/3 & 2/6 & 4/12 \\
% \midrule
% Point-Robot & $-5.6 \scalebox{1.1}{\tiny{$\pm$}} \scalebox{1.1}{\tiny{0.2}}$ & $\textbf{-5.0} \scalebox{1.1}{\tiny{$\pm$}} \scalebox{1.1}{\tiny{0.5}}$ & $-5.3 \scalebox{1.1}{\tiny{$\pm$}} \scalebox{1.1}{\tiny{0.3}}$ \\
% Cheetah-Vel & $-53.3 \scalebox{1.1}{\tiny{$\pm$}} \scalebox{1.1}{\tiny{1.3}}$ & $\textbf{-43.2} \scalebox{1.1}{\tiny{$\pm$}} \scalebox{1.1}{\tiny{0.8}}$ & $-50.7 \scalebox{1.1}{\tiny{$\pm$}} \scalebox{1.1}{\tiny{1.9}}$ \\
\bottomrule
\end{tabular}
\label{tab:hyper-n_experts-token}
\end{table*}
%%%%%%%%%%%%%%%%%%%%%%%%%%%%%%%%%%%%%%%%%%%%%%%%%%%%%%%%

%%%%%%%%%%%%%%%%%%%%%%%%%%%%%%%%%%%%%%%%%%%%%%%%%%%%%%%%
\begin{table*}[htb]
\caption{
Numerical results of T2MIR with varying expert numbers in task-wise MoE.
Best in \textbf{bold}.
}
\centering
\small
\setlength{\tabcolsep}{5.0pt}
\begin{tabular}{l|ccc|l|ccc}
\toprule
T2MIR-AD & 2/4 & 2/6 & 2/8 & T2MIR-DPT & 2/4 & 2/6 & 2/8\\
\midrule
Point-Robot & $-5.3 \scalebox{1.1}{\tiny{$\pm$}} \scalebox{1.1}{\tiny{0.3}}$ & $-5.2 \scalebox{1.1}{\tiny{$\pm$}} \scalebox{1.1}{\tiny{0.3}}$ & $\textbf{-5.2} \scalebox{1.1}{\tiny{$\pm$}} \scalebox{1.1}{\tiny{0.3}}$ & Point-Robot & $-5.2 \scalebox{1.1}{\tiny{$\pm$}} \scalebox{1.1}{\tiny{0.4}}$ & $-5.2 \scalebox{1.1}{\tiny{$\pm$}} \scalebox{1.1}{\tiny{0.2}}$ & $\textbf{-5.0} \scalebox{1.1}{\tiny{$\pm$}} \scalebox{1.1}{\tiny{0.5}}$\\
Cheetah-Vel & $-91.7 \scalebox{1.1}{\tiny{$\pm$}} \scalebox{1.1}{\tiny{27.9}}$ & $-76.3 \scalebox{1.1}{\tiny{$\pm$}} \scalebox{1.1}{\tiny{1.7}}$ & $\textbf{-68.9} \scalebox{1.1}{\tiny{$\pm$}} \scalebox{1.1}{\tiny{2.1}}$ & Cheetah-Vel & $-51.5 \scalebox{1.1}{\tiny{$\pm$}} \scalebox{1.1}{\tiny{7.4}}$ & $-45.9 \scalebox{1.1}{\tiny{$\pm$}} \scalebox{1.1}{\tiny{2.1}}$ & $\textbf{-43.2} \scalebox{1.1}{\tiny{$\pm$}} \scalebox{1.1}{\tiny{0.8}}$\\
% \cmidrule[\heavyrulewidth]{1-4}
% T2MIR-DPT & 2/4 & 2/6 & 2/8 \\
% \midrule
% Point-Robot & $-5.2 \scalebox{1.1}{\tiny{$\pm$}} \scalebox{1.1}{\tiny{0.4}}$ & $-5.2 \scalebox{1.1}{\tiny{$\pm$}} \scalebox{1.1}{\tiny{0.2}}$ & $\textbf{-5.0} \scalebox{1.1}{\tiny{$\pm$}} \scalebox{1.1}{\tiny{0.5}}$ \\
% Cheetah-Vel & $-51.5 \scalebox{1.1}{\tiny{$\pm$}} \scalebox{1.1}{\tiny{7.4}}$ & $-45.9 \scalebox{1.1}{\tiny{$\pm$}} \scalebox{1.1}{\tiny{2.1}}$ & $\textbf{-43.2} \scalebox{1.1}{\tiny{$\pm$}} \scalebox{1.1}{\tiny{0.8}}$ \\
\bottomrule
\end{tabular}
\label{tab:hyper-n_experts-task}
\end{table*}
%%%%%%%%%%%%%%%%%%%%%%%%%%%%%%%%%%%%%%%%%%%%%%%%%%%%%%%%

%%%%%%%%%%%%%%%%%%%%%%%%%%%%%%%%%%%%%%%%%%%%%%%%%%%%%%%%
\begin{figure}[!h]
\begin{center}
\includegraphics[width=0.98\textwidth]{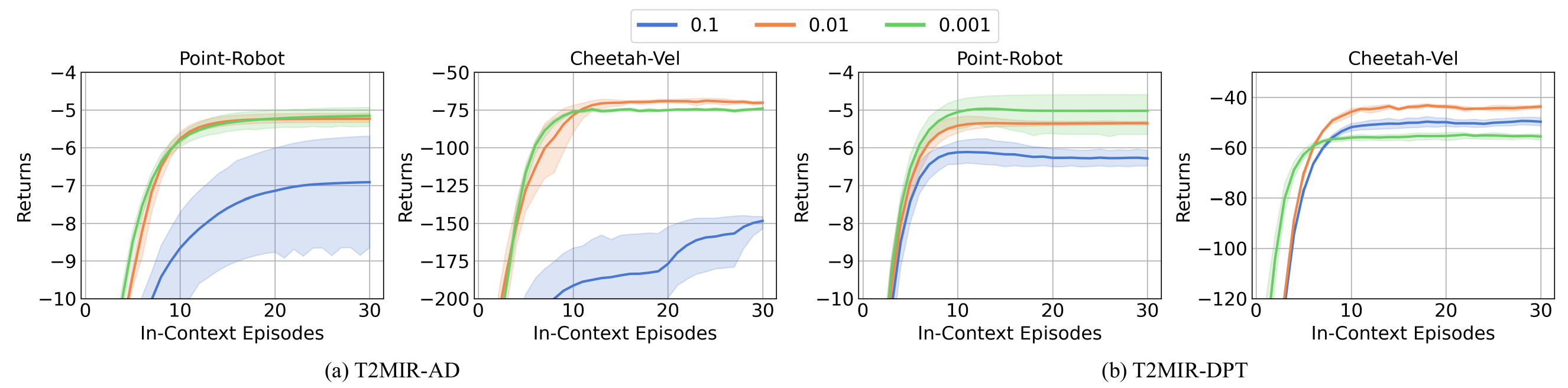}
\end{center}
\caption{
Test return curves of T2MIR with different contrastive loss weights.
}
\label{fig:hyper-contrastive_loss_weight}
\end{figure}
%%%%%%%%%%%%%%%%%%%%%%%%%%%%%%%%%%%%%%%%%%%%%%%%%%%%%%%%

%%%%%%%%%%%%%%%%%%%%%%%%%%%%%%%%%%%%%%%%%%%%%%%%%%%%%%%%
\begin{table*}[!h]
\caption{
Numerical results of T2MIR with different loss weights of InfoNCE loss, i.e., numerical results of best performance from Figure~\ref{fig:hyper-contrastive_loss_weight}.
Best result in \textbf{bold}.
}
\centering
\small
\setlength{\tabcolsep}{5.0pt}
\begin{tabular}{l|ccc|l|ccc}
\toprule
T2MIR-AD & 0.1 & 0.01 & 0.001 & T2MIR-DPT & 0.1 & 0.01 & 0.001\\
\midrule
Point-Robot & $-6.9 \scalebox{1.1}{\tiny{$\pm$}} \scalebox{1.1}{\tiny{1.6}}$ & $\textbf{-5.2} \scalebox{1.1}{\tiny{$\pm$}} \scalebox{1.1}{\tiny{0.1}}$ & $-5.2 \scalebox{1.1}{\tiny{$\pm$}} \scalebox{1.1}{\tiny{0.3}}$ & Point-Robot & $-6.1 \scalebox{1.1}{\tiny{$\pm$}} \scalebox{1.1}{\tiny{0.4}}$ & $-5.4 \scalebox{1.1}{\tiny{$\pm$}} \scalebox{1.1}{\tiny{0.1}}$ & $\textbf{-5.0} \scalebox{1.1}{\tiny{$\pm$}} \scalebox{1.1}{\tiny{0.5}}$ \\
Cheetah-Vel & $-148.4 \scalebox{1.1}{\tiny{$\pm$}} \scalebox{1.1}{\tiny{5.0}}$ & $\textbf{-68.9} \scalebox{1.1}{\tiny{$\pm$}} \scalebox{1.1}{\tiny{2.1}}$ & $-74.1 \scalebox{1.1}{\tiny{$\pm$}} \scalebox{1.1}{\tiny{1.0}}$ & Cheetah-Vel & $-49.4 \scalebox{1.1}{\tiny{$\pm$}} \scalebox{1.1}{\tiny{1.6}}$ & $\textbf{-43.2} \scalebox{1.1}{\tiny{$\pm$}} \scalebox{1.1}{\tiny{0.8}}$ & $-54.9 \scalebox{1.1}{\tiny{$\pm$}} \scalebox{1.1}{\tiny{1.3}}$ \\
% \cmidrule[\heavyrulewidth]{1-4}
% T2MIR-DPT & 0.1 & 0.01 & 0.001 \\
% \midrule
% Point-Robot & $-6.1 \scalebox{1.1}{\tiny{$\pm$}} \scalebox{1.1}{\tiny{0.4}}$ & $-5.4 \scalebox{1.1}{\tiny{$\pm$}} \scalebox{1.1}{\tiny{0.1}}$ & $\textbf{-5.0} \scalebox{1.1}{\tiny{$\pm$}} \scalebox{1.1}{\tiny{0.5}}$ \\
% Cheetah-Vel & $-49.4 \scalebox{1.1}{\tiny{$\pm$}} \scalebox{1.1}{\tiny{1.6}}$ & $\textbf{-43.2} \scalebox{1.1}{\tiny{$\pm$}} \scalebox{1.1}{\tiny{0.8}}$ & $-54.9 \scalebox{1.1}{\tiny{$\pm$}} \scalebox{1.1}{\tiny{1.3}}$ \\
\bottomrule
\end{tabular}
\label{tab:hyper-contrastive_loss_weight}
\end{table*}
%%%%%%%%%%%%%%%%%%%%%%%%%%%%%%%%%%%%%%%%%%%%%%%%%%%%%%%%

\textbf{Weight of InfoNCE Loss.}
We investigate the influence of the loss weight of InfoNCE loss in task-wise MoE, the results are shown in Figure~\ref{fig:hyper-contrastive_loss_weight} and Table~\ref{tab:hyper-contrastive_loss_weight}.
The performance of T2MIR is not sensitive to small weights.
But when the InfoNCE loss weight is set to $0.1$, it gains an obvious decrease.
We suppose this is because the final loss of $\mathcal{L}_\text{contrastive}$ in Eq.~(\ref{nce_loss2}) has same magnitude with imitation loss $\mathcal{L}(\theta)$ in Eq.~(\ref{eq:ad_loss}) and Eq.~(\ref{eq:dpt_loss}), and it disturbs the overall learning process.

\subsection{Architecture Analysis}
\textbf{Position of MoE Layer.}
Further more, we explore how the placement of the MoE layer affects performance.
We substitute the feedforward network in one transformer block with MoE layer and study three settings of different positions: $\texttt{bottom}$, $\texttt{middle}$ and $\texttt{top}$, where the MoE layer is used in the first, $\frac{L}{2}$-th and last transformer block, respectively.
Figure~\ref{fig:hyper-moe_layer} and Table~\ref{tab:hyper-moe_layer} present the results, where substituting the feedforward network in the last transformer block with MoE layer ($\texttt{top}$) gets more stable performance.

\textbf{Importance of Balance and InfoNCE Loss.}
We conduct further ablation studies to investigate the respective impacts of additional losses used to balance the experts.
We compare T2MIR-AD to three ablations: 1) \texttt{w/o balance\_loss}, omitting balance loss (Eq.~(\ref{eq:l_balance})) in token-wise MoE, while retaining InfoNCE loss in task-wise MoE; 2) \texttt{w/o infonce\_loss}, omitting InfoNCE loss (Eq.~(\ref{nce_loss2})) in task-wise MoE, while retaining balance loss in token-wise MoE; and 3) \texttt{w/o aux\_loss}, omitting both balance loss and InfoNCE loss.
For the ablation studies, we just omit the additional losses while keeping the token- and task-wise MoE architectures.
The numerical results in Table~\ref{tab:ablation_loss} show that both balance loss and InfoNCE loss play significant roles in T2MIR’s superior performance.
Ablating the balance loss will cause a bit more performance degradation than ablating the InfoNCE loss.

\textbf{Architecture of Token-wise MoE.}
To quantify the challenge incurred by the semantic discrepancy among states, actions, and rewards, we completely isolate the multi-modality issue and manually design a MoE structure where state-action-reward tokens are routed to three experts by a heuristic gating scheme, i.e., one expert for one modality.
The results in Table~\ref{tab:arch_token_moe} show that the heuristic gating mechanism contributes to performance improvement, while T2MIR achieves a more pronounced enhancement by automatically routing state-action-reward tokens.
The enhancement achieved by transitioning from heuristic routing to automatic routing may stem from a more flexible approach to processing the integrated information generated after the attention mechanism.
The results again demonstrate the significant challenge induced by the semantic discrepancy across multi-modal tokens in the state-action-reward sequence, and the significant superiority of token-wise MoE architecture.

%%%%%%%%%%%%%%%%%%%%%%%%%%%%%%%%%%%%%%%%%%%%%%%%%%%%%%%%
\begin{figure}[!t]
\begin{center}
\includegraphics[width=0.98\textwidth]{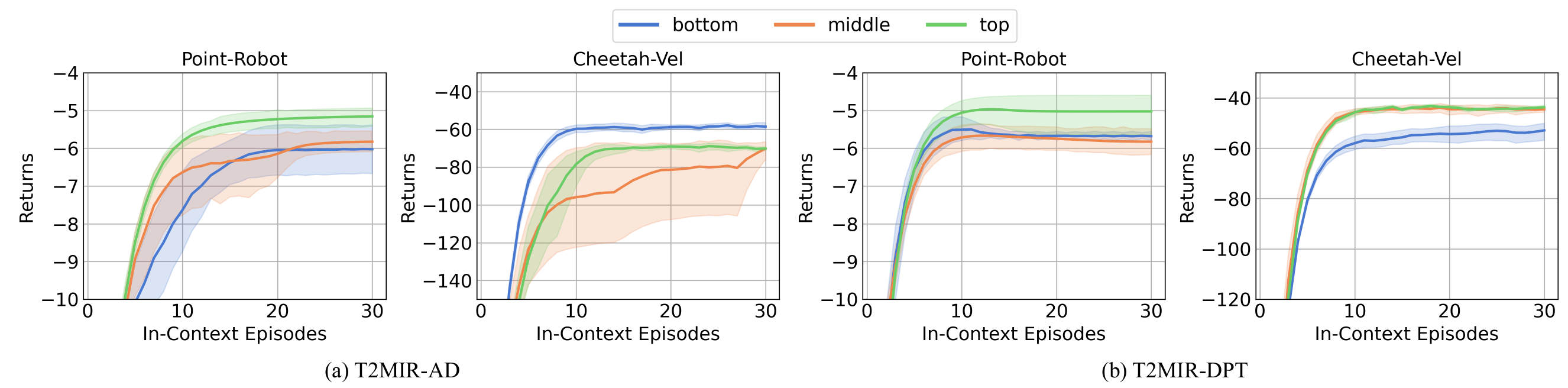}
\end{center}
\caption{
Test return curves of T2MIR with different positions of MoE layer.
}
\label{fig:hyper-moe_layer}
\end{figure}
%%%%%%%%%%%%%%%%%%%%%%%%%%%%%%%%%%%%%%%%%%%%%%%%%%%%%%%%

%%%%%%%%%%%%%%%%%%%%%%%%%%%%%%%%%%%%%%%%%%%%%%%%%%%%%%%%
\begin{table*}[!t]
\caption{
Numerical results of T2MIR with different positions of MoE layer, i.e., numerical results of best performance from Figure~\ref{fig:hyper-moe_layer}.
Best result in \textbf{bold}.
}
\centering
\small
\setlength{\tabcolsep}{5.0pt}
\begin{tabular}{l|ccc|l|ccc}
\toprule
T2MIR-AD & bottom & middle & top & T2MIR-DPT & bottom & middle & top\\
\midrule
Point-Robot & $-6.0 \scalebox{1.1}{\tiny{$\pm$}} \scalebox{1.1}{\tiny{0.8}}$ & $-5.8 \scalebox{1.1}{\tiny{$\pm$}} \scalebox{1.1}{\tiny{0.3}}$ & $\textbf{-5.2} \scalebox{1.1}{\tiny{$\pm$}} \scalebox{1.1}{\tiny{0.3}}$ & Point-Robot & $-5.5 \scalebox{1.1}{\tiny{$\pm$}} \scalebox{1.1}{\tiny{0.3}}$ & $-5.7 \scalebox{1.1}{\tiny{$\pm$}} \scalebox{1.1}{\tiny{0.4}}$ & $\textbf{-5.0} \scalebox{1.1}{\tiny{$\pm$}} \scalebox{1.1}{\tiny{0.5}}$\\
Cheetah-Vel & $\textbf{-57.9} \scalebox{1.1}{\tiny{$\pm$}} \scalebox{1.1}{\tiny{1.6}}$ & $-70.2 \scalebox{1.1}{\tiny{$\pm$}} \scalebox{1.1}{\tiny{5.4}}$ & $-68.9 \scalebox{1.1}{\tiny{$\pm$}} \scalebox{1.1}{\tiny{2.1}}$ & Cheetah-Vel & $-52.9 \scalebox{1.1}{\tiny{$\pm$}} \scalebox{1.1}{\tiny{4.0}}$ & $-43.8 \scalebox{1.1}{\tiny{$\pm$}} \scalebox{1.1}{\tiny{2.2}}$ & $\textbf{-43.2} \scalebox{1.1}{\tiny{$\pm$}} \scalebox{1.1}{\tiny{0.8}}$\\
\bottomrule
\end{tabular}
\label{tab:hyper-moe_layer}
\end{table*}
%%%%%%%%%%%%%%%%%%%%%%%%%%%%%%%%%%%%%%%%%%%%%%%%%%%%%%%%

%%%%%%%%%%%%%%%%%%%%%%%%%%%%%%%%%%%%%%%%%%%%%%%%%%%%%%%%
\begin{table*}[!h]
\caption{
Numerical results of ablation study on the importance of balance and InfoNCE loss on T2MIR-AD using Mixed datasets.
Best result in \textbf{bold}.
}
\centering
\small
\setlength{\tabcolsep}{6.0pt}
\begin{tabular}{l|cccc}
\toprule
Environments & \textbf{T2MIR-AD} & w/o infonce\_loss & w/o balance\_loss & w/o aux\_loss \\
\midrule
Cheetah-Vel & $\textbf{-68.9} \scalebox{1.1}{\tiny{$\pm$}} \scalebox{1.1}{\tiny{2.1}}$ & $-76.7 \scalebox{1.1}{\tiny{$\pm$}} \scalebox{1.1}{\tiny{3.5}}$ & $-92.8 \scalebox{1.1}{\tiny{$\pm$}} \scalebox{1.1}{\tiny{1.6}}$ & $-112.4 \scalebox{1.1}{\tiny{$\pm$}} \scalebox{1.1}{\tiny{30.4}}$ \\
Walker-Param & $\textbf{435.7} \scalebox{1.1}{\tiny{$\pm$}} \scalebox{1.1}{\tiny{7.2}}$ & $417.0 \scalebox{1.1}{\tiny{$\pm$}} \scalebox{1.1}{\tiny{2.6}}$ & $416.0 \scalebox{1.1}{\tiny{$\pm$}} \scalebox{1.1}{\tiny{5.3}}$ & $393.2 \scalebox{1.1}{\tiny{$\pm$}} \scalebox{1.1}{\tiny{6.7}}$ \\
\bottomrule
\end{tabular}
\label{tab:ablation_loss}
\end{table*}
%%%%%%%%%%%%%%%%%%%%%%%%%%%%%%%%%%%%%%%%%%%%%%%%%%%%%%%%

%%%%%%%%%%%%%%%%%%%%%%%%%%%%%%%%%%%%%%%%%%%%%%%%%%%%%%%%
\begin{table*}[!h]
\caption{
Numerical results of different architectures of token-wise MoE on T2MIR-AD using Mixed datasets.
Best result in \textbf{bold}.
}
\centering
\small
\setlength{\tabcolsep}{8.0pt}
\begin{tabular}{l|ccc}
\toprule
Environments & \textbf{T2MIR-AD} & heuristic routing & AD \\
\midrule
Cheetah-Vel & $\textbf{-68.9} \scalebox{1.1}{\tiny{$\pm$}} \scalebox{1.1}{\tiny{2.1}}$ & $-91.1 \scalebox{1.1}{\tiny{$\pm$}} \scalebox{1.1}{\tiny{30.4}}$ & $-119.2 \scalebox{1.1}{\tiny{$\pm$}} \scalebox{1.1}{\tiny{30.4}}$ \\
Walker-Param & $\textbf{435.7} \scalebox{1.1}{\tiny{$\pm$}} \scalebox{1.1}{\tiny{7.2}}$ & $407.1 \scalebox{1.1}{\tiny{$\pm$}} \scalebox{1.1}{\tiny{10.5}}$ & $395.2 \scalebox{1.1}{\tiny{$\pm$}} \scalebox{1.1}{\tiny{7.1}}$ \\
\bottomrule
\end{tabular}
\label{tab:arch_token_moe}
\end{table*}
%%%%%%%%%%%%%%%%%%%%%%%%%%%%%%%%%%%%%%%%%%%%%%%%%%%%%%%%

\section{Robustness to OOD Setups}
\subsection{Robustness to OOD test tasks}
We conduct new experiments to evaluate T2MIR’s capability to handle OOD test tasks on Cheetah-Vel against baselines.
The goal velocities of training tasks are uniformly sampled from $U[0.075,3]$, but during testing we sample OOD tasks from $U[3.1,3.3]$.
The results in Table~\ref{tab:robust_ood_task} demonstrate T2MIR’s consistent superiority even when dealing with OOD test tasks.

\subsection{Robustness to OOD Offline Prompt Data}
In our main experiments in Sec.~\ref{exp:main}, we evaluate T2MIR and baselines via direct online interaction with the test environment, without access to any offline data.
That is, the prompts or contexts are generated from online environment interactions to infer task beliefs during testing.
In this section, we have conducted new experiments to evaluate T2MIR-DPT’s robustness to OOD offline data as the test prompt.
Specifically, we employ a random behavior policy to interact with the environment and collect one trajectory as the offline prompt.
The results in Table~\ref{tab:robust_ood_data} demonstrate T2MIR’s consistent superiority even when prompted with OOD offline data.

%%%%%%%%%%%%%%%%%%%%%%%%%%%%%%%%%%%%%%%%%%%%%%%%%%%%%%%%
\begin{table*}[t]
\caption{
Numerical results of T2MIR against baselines on OOD test tasks using Mixed datasets.
Best result in \textbf{bold} and second best \underline{underline}.
}
\centering
\footnotesize
\setlength{\tabcolsep}{2.5pt}
\begin{tabular}{l|cc|ccccc}
\toprule
Environment & \textbf{T2MIR-AD} & \textbf{T2MIR-DPT} & AD & DPT & DICP & IDT & UNICORN \\
\midrule
Cheetah-Vel & $-117.3 \scalebox{1.1}{\tiny{$\pm$}} \scalebox{1.1}{\tiny{4.4}}$ & $\textbf{-71.0} \scalebox{1.1}{\tiny{$\pm$}} \scalebox{1.1}{\tiny{0.7}}$ & $-158.8 \scalebox{1.1}{\tiny{$\pm$}} \scalebox{1.1}{\tiny{47.1}}$ & $\underline{-78.5} \scalebox{1.1}{\tiny{$\pm$}} \scalebox{1.1}{\tiny{2.6}}$ & $-109.1 \scalebox{1.1}{\tiny{$\pm$}} \scalebox{1.1}{\tiny{2.1}}$ & $-114.3 \scalebox{1.1}{\tiny{$\pm$}} \scalebox{1.1}{\tiny{2.3}}$ & $-88.6 \scalebox{1.1}{\tiny{$\pm$}} \scalebox{1.1}{\tiny{2.1}}$ \\
\bottomrule
\end{tabular}
\label{tab:robust_ood_task}
\end{table*}
%%%%%%%%%%%%%%%%%%%%%%%%%%%%%%%%%%%%%%%%%%%%%%%%%%%%%%%%

%%%%%%%%%%%%%%%%%%%%%%%%%%%%%%%%%%%%%%%%%%%%%%%%%%%%%%%%
\begin{table*}[!h]
\caption{
Numerical results of T2MIR-DPT against DPT with OOD offline prompt data using Mixed datasets.
}
\centering
\small
\setlength{\tabcolsep}{10.0pt}
\begin{tabular}{l|cc}
\toprule
Environments & T2MIR-DPT & DPT \\
\midrule
Point-Robot & $-5.0 \scalebox{1.1}{\tiny{$\pm$}} \scalebox{1.1}{\tiny{0.2}}$ & $-5.9 \scalebox{1.1}{\tiny{$\pm$}} \scalebox{1.1}{\tiny{0.1}}$ \\
Cheetah-Vel & $-35.0 \scalebox{1.1}{\tiny{$\pm$}} \scalebox{1.1}{\tiny{2.9}}$ & $63.2 \scalebox{1.1}{\tiny{$\pm$}} \scalebox{1.1}{\tiny{4.4}}$ \\
Walker-Param & $424.1 \scalebox{1.1}{\tiny{$\pm$}} \scalebox{1.1}{\tiny{17.3}}$ & $419.0 \scalebox{1.1}{\tiny{$\pm$}} \scalebox{1.1}{\tiny{11.5}}$ \\
\bottomrule
\end{tabular}
\label{tab:robust_ood_data}
\end{table*}
%%%%%%%%%%%%%%%%%%%%%%%%%%%%%%%%%%%%%%%%%%%%%%%%%%%%%%%%

\section{More Experiments on Ant-Dir and Meta-World}
In addition, we have conducted new evaluations on the challenging Ant-Dir in MuJoCo and ML10 in Meta-World to demonstrate T2MIR's ability to these harder suites.
Details about the two environments can be found in Appendix~\ref{app:environment}.
The numerical results in Table~\ref{tab:harder_envs} show the promising scalability and the consistent superiority of T2MIR over various baselines.

%%%%%%%%%%%%%%%%%%%%%%%%%%%%%%%%%%%%%%%%%%%%%%%%%%%%%%%%
\begin{table*}[!t]
\caption{
Numerical results of T2MIR on Ant-Dir in MuJoCo and ML10 in Meta-World.
Best result in \textbf{bold} and second best \underline{underline}.
}
\centering
\small
\setlength{\tabcolsep}{3.0pt}
\begin{tabular}{l|cc|ccccc}
\toprule
Environments & \textbf{T2MIR-AD} & \textbf{T2MIR-DPT} & AD & DPT & DICP & IDT & UNICORN \\
\midrule
Ant-Dir & $681.5 \scalebox{1.1}{\tiny{$\pm$}} \scalebox{1.1}{\tiny{30.4}}$ & $\textbf{708.6} \scalebox{1.1}{\tiny{$\pm$}} \scalebox{1.1}{\tiny{24.5}}$ & $432.2 \scalebox{1.1}{\tiny{$\pm$}} \scalebox{1.1}{\tiny{42.5}}$ & $634.1 \scalebox{1.1}{\tiny{$\pm$}} \scalebox{1.1}{\tiny{46.7}}$ & $\underline{690.3} \scalebox{1.1}{\tiny{$\pm$}} \scalebox{1.1}{\tiny{20.4}}$ & $643.7 \scalebox{1.1}{\tiny{$\pm$}} \scalebox{1.1}{\tiny{34.5}}$ & $450.9 \scalebox{1.1}{\tiny{$\pm$}} \scalebox{1.1}{\tiny{10.2}}$ \\
ML10 & $\textbf{318.1} \scalebox{1.1}{\tiny{$\pm$}} \scalebox{1.1}{\tiny{11.3}}$ & $\underline{291.3} \scalebox{1.1}{\tiny{$\pm$}} \scalebox{1.1}{\tiny{21.9}}$ & $277.0 \scalebox{1.1}{\tiny{$\pm$}} \scalebox{1.1}{\tiny{39.0}}$ & $230.6 \scalebox{1.1}{\tiny{$\pm$}} \scalebox{1.1}{\tiny{25.1}}$ & $215.9 \scalebox{1.1}{\tiny{$\pm$}} \scalebox{1.1}{\tiny{11.2}}$ & $223.3 \scalebox{1.1}{\tiny{$\pm$}} \scalebox{1.1}{\tiny{13.3}}$ & $193.6 \scalebox{1.1}{\tiny{$\pm$}} \scalebox{1.1}{\tiny{10.7}}$ \\
\bottomrule
\end{tabular}
\label{tab:harder_envs}
\end{table*}
%%%%%%%%%%%%%%%%%%%%%%%%%%%%%%%%%%%%%%%%%%%%%%%%%%%%%%%%

\section{More Visualization Insights}\label{app:visualization}

%%%%%%%%%%%%%%%%%%%%%%%%%%%%%%%%%%%%%%%%%%%%%%%%%%%%%%%%
\begin{wrapfigure}{r}{0.4\textwidth}
\vspace{-1em}
\centering
\
\includegraphics[width=0.4\textwidth]{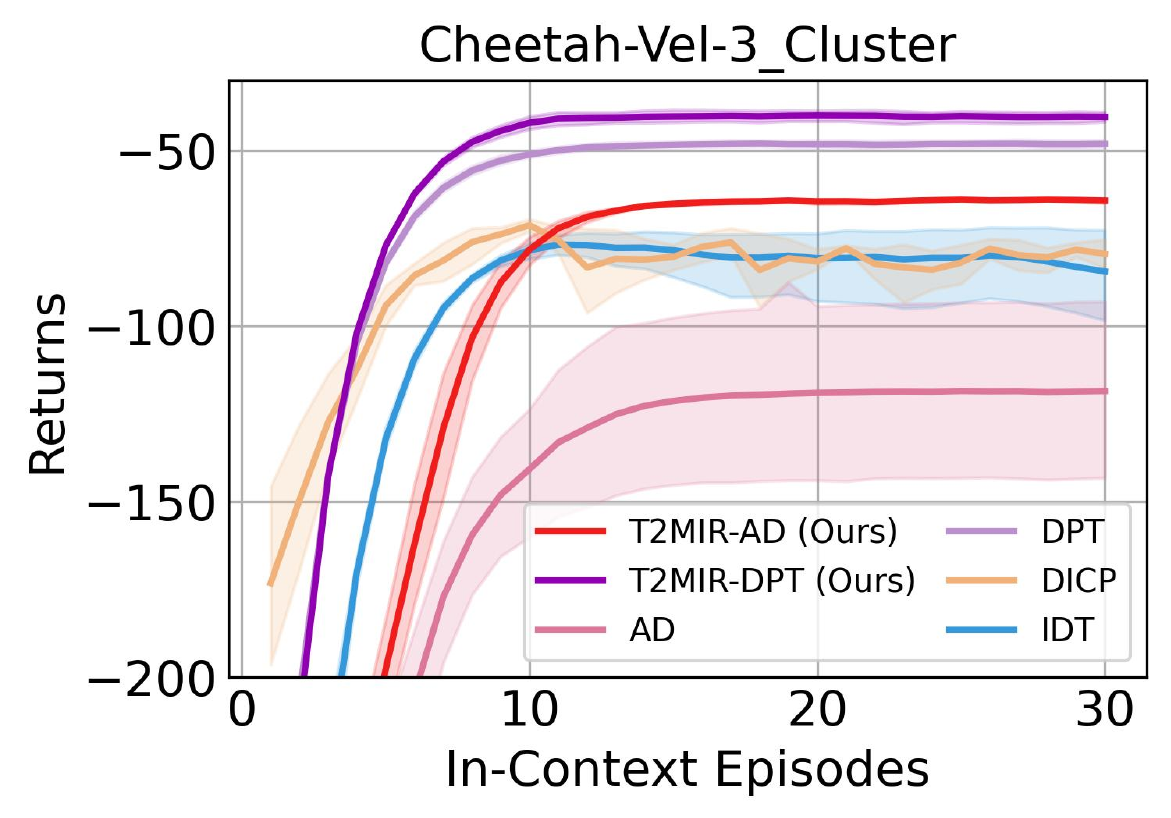}
\caption{
Test return curves of T2MIR against baselines on Cheetah-Vel-3\_Clustering using Mixed datasets.
}
\label{fig:cheetah_vel_v3}
\vspace{-1em}
\end{wrapfigure}
%%%%%%%%%%%%%%%%%%%%%%%%%%%%%%%%%%%%%%%%%%%%%%%%%%%%%%%%

\textbf{t-SNE Visualization on Push.}
We also visualize the multi-modal property and task clustering in T2MIR-AD on Push, as shown in Figure~\ref{fig:visual-push}.
The visualization results consistently exhibit the ability of token-wise MoE to process token from different modalities with different experts, and task-wise MoE to distribute trajectories from different tasks to different experts.

\textbf{Heterogeneous Setting.}\label{visual:heterogeneous}
For a further study of the task clustering ability of T2MIR, we construct a heterogeneous version \texttt{Cheetah-Vel} inspired by M{\tiny{I}}LE{\tiny{T}}~\citep{Chu_Cai_Wang_2024}.
We split the velocities of tasks into three intervals, constructing \texttt{Cheetah-Vel-3\_Cluster} environment.
Figure~\ref{fig:cheetah_vel_v3} presents the test return curves of T2MIR and baselines using Mixed datasets.
We demonstrate the probability of task assignments to some experts, as shown in Figure~\ref{fig:visual-cheetah_vel_v3}, the velocities from three different distributions are divided by dashed lines.
The expert-1 dominates at lower speeds, the expert-2 dominates at higher speeds, and it exhibits a mixture of different experts for medium speeds, with expert-1 being dominant.
Moreover, there are switched among different experts at the boundaries between different velocity distributions.
The results indicate that distinct experts dominate different tasks sampled from the three distributions, which further validates our motivation to employ token-wise MoE.

%%%%%%%%%%%%%%%%%%%%%%%%%%%%%%%%%%%%%%%%%%%%%%%%%%%%%%%%
\begin{figure}[tb]
\begin{center}
\includegraphics[width=0.7\textwidth]{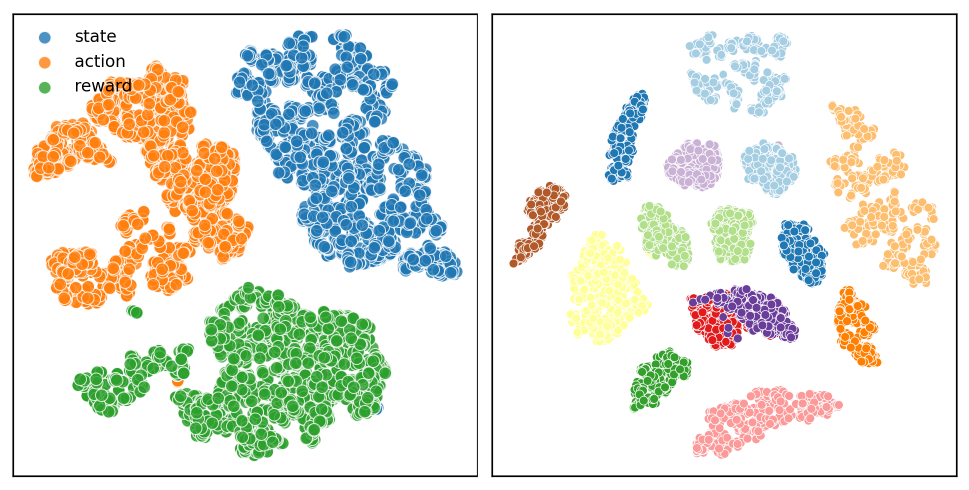}
\end{center}
\caption{
t-SNE visualization of expert assignments on Push.
Left: token-wise MoE enables different experts to process three-modality tokens.
Right: task-wise MoE effectively manages a task distribution, where trajectories from same task are prone to be distributed to same experts.
}
\label{fig:visual-push}
\end{figure}
%%%%%%%%%%%%%%%%%%%%%%%%%%%%%%%%%%%%%%%%%%%%%%%%%%%%%%%%

%%%%%%%%%%%%%%%%%%%%%%%%%%%%%%%%%%%%%%%%%%%%%%%%%%%%%%%%
\begin{figure}[tb]
\begin{center}
\includegraphics[width=0.98\textwidth]{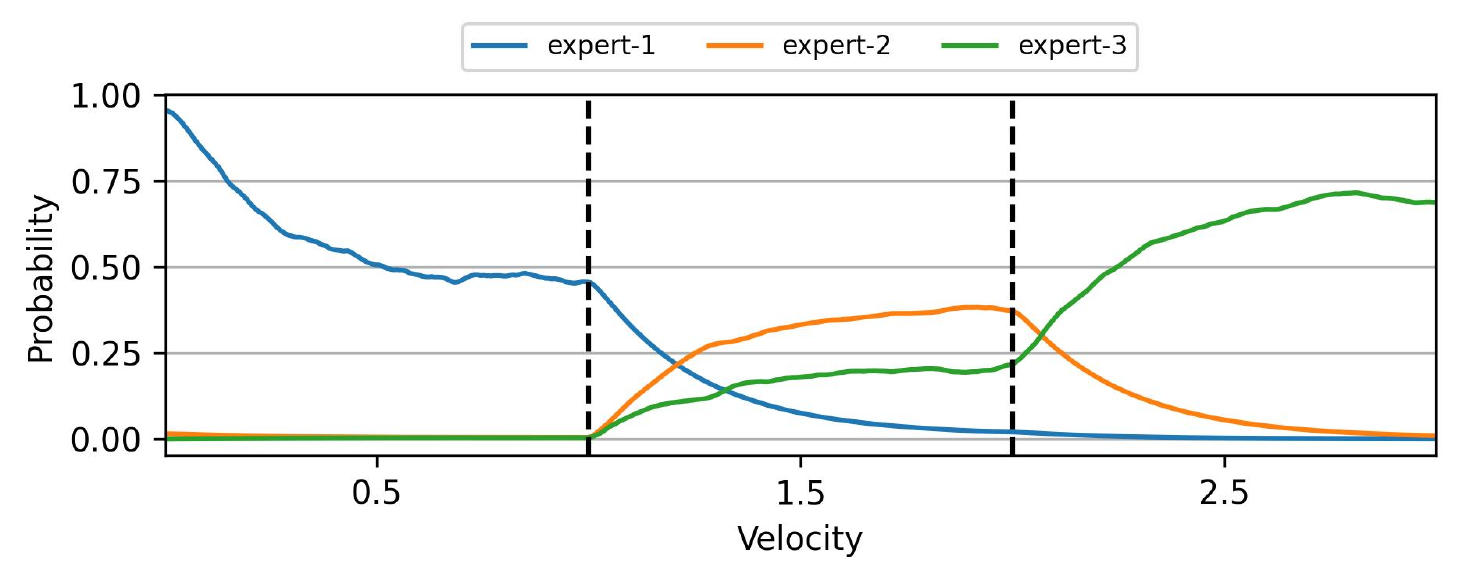}
\end{center}
\caption{
Probability of task assignments to some experts.
The goal velocities are sampled from three distributions and divided by dashed lines.
}
\label{fig:visual-cheetah_vel_v3}
\end{figure}
%%%%%%%%%%%%%%%%%%%%%%%%%%%%%%%%%%%%%%%%%%%%%%%%%%%%%%%%

\section{Empirical Evidence of Gradient Conflicts}
In addition to the results in Figure~\ref{fig:visual-conflicts}, we also provide quantification of the gradient conflict issue.
The results in Table~\ref{tab:conflict} present the proportion of task pairs with negative correlations (cosine similarity $<-0.05$) among all pairs of tasks, which demonstrate T2MIR’s significant superiority in mitigating gradient conflicts.

%%%%%%%%%%%%%%%%%%%%%%%%%%%%%%%%%%%%%%%%%%%%%%%%%%%%%%%%
\begin{table*}[!h]
\caption{
Quantification of the gradient conflict.
}
\centering
\small
\setlength{\tabcolsep}{10.0pt}
\begin{tabular}{l|ccccc}
\toprule
Methods & Point-Robot & Cheetah-Vel & Walker-Param & Reach & Push \\
\midrule
AD & $31.1\%$ & $33.9\%$ & $23.3\%$ & $20.2\%$ & $12.8\%$ \\
T2MIR-AD & $7.4\%$ & $1.2\%$ & $1.1\%$ & $15.6\%$ & $1.8\%$ \\
\bottomrule
\end{tabular}
\label{tab:conflict}
\end{table*}
%%%%%%%%%%%%%%%%%%%%%%%%%%%%%%%%%%%%%%%%%%%%%%%%%%%%%%%%

\section{Limitations}\label{app:limitation}
We discuss the limitations and computational efficiency of our method in this section.
Constrained by limited computing resources, our method is trained on lightweight datasets, e.g., the Mixed datasets of Push contain 500 episodes per task and 45 training tasks, which have totally 2.25 million transitions.
Although the size of datasets are small, we find it's enough to gain a high performance.
But the size of datasets limits the capacity of MoE in extending the size of model parameters.
Future work may evaluate their study on more complex environments and larger datasets such as XLand-MiniGrid~\citep{nikulin2024XLand, nikulin2025xland}.
The efficiency of contrastive loss in task-wise MoE when facing massive number of tasks is not thoroughly explored in this work.
It is interesting to explore whether the contrastive loss is effective for managing task assignments in scenarios involving a large number of tasks.
Integrating the T2MIR framework incurs a slightly higher computational cost during the training process, but it requires less computational resources during task inference.

\end{document}